\definecolor{linkblue}{rgb}{0.1,0.1,0.8}
\newtheorem{theorem}{Theorem}
\newtheorem{lemma}[theorem]{Lemma}
\newtheorem{definition}[theorem]{Definition}
\newtheorem{remark}[theorem]{Remark}
\newcommand{\R}{\mathbb{R}}
\newcommand{\A}{\mathcal{A}}
\newcommand{\F}{\mathcal{F}}
\newcommand{\I}{\mathcal{I}}
\newcommand{\Ce}{\mathcal{C}}
\renewcommand{\epsilon}{\varepsilon}
\newcommand{\eps}{\varepsilon}
\DeclareMathOperator{\E}{\mathbb{E}}
\newcommand{\assign}{\leftarrow}
\newcommand{\oea}{$(1 + 1)$~EA\xspace}
\newcommand{\onemax}{\textsc{OneMax}\xspace}
\newcommand{\leadingones}{\textsc{LeadingOnes}\xspace}
\newcommand{\lo}{\textsc{Lo}\xspace}
\newcommand{\LO}{\textsc{Lo}\xspace}
\newcommand{\AM}{\ensuremath{\mathcal{A}_{\mathcal{M}}}}
\begin{document}

\title{The (1+1) Elitist Black-Box Complexity of LeadingOnes\footnote{An extended abstract of this paper will appear at GECCO 2016.}}

\author{
Carola Doerr$^{1,2}$, 
Johannes Lengler$^3$}
\date{
{\small $^1$CNRS, UMR 7606, LIP6, 75005 Paris, France\\
$^2$Sorbonne Universit\'es, UPMC Univ Paris 06,  UMR 7606, LIP6, 75005 Paris, France\\
$^3$Institute for Theoretical Computer Science, ETH Z{\"u}rich, Switzerland} 
\\[2ex]
\today
}
\maketitle

\begin{abstract}
One important goal of black-box complexity theory is the development of complexity models allowing to derive meaningful lower bounds for whole classes of randomized search heuristics. Complementing classical runtime analysis, black-box models help us understand how algorithmic choices such as the population size, the variation operators, or the selection rules influence the optimization time. One example for such a result is the $\Omega(n \log n)$ lower bound for unary unbiased algorithms on functions with a unique global optimum [Lehre/Witt, GECCO 2010], which tells us that higher arity operators or biased sampling strategies are needed when trying to beat this bound. In lack of analyzing techniques, almost no non-trivial bounds are known for other restricted models. Proving such bounds therefore remains to be one of the main challenges in black-box complexity theory.

With this paper we contribute to our technical toolbox for lower bound computations by proposing a new type of information-theoretic argument. We regard the permutation- and bit-invariant version of \textsc{LeadingOnes} and prove that its (1+1) elitist black-box complexity is $\Omega(n^2)$, a bound that is matched by (1+1)-type evolutionary algorithms. The (1+1) elitist complexity of \textsc{LeadingOnes} is thus considerably larger than its unrestricted one, which is known to be of order $n\log\log n$ [Afshani et al., 2013].
\end{abstract}


\sloppy{

\begin{table*}
\centering
\begin{scriptsize}
\begin{tabular}{l|ll|ll}
\textbf{Model}    
& \multicolumn{2}{c|}{\textbf{Lower Bound}} 
& \multicolumn{2}{c}{\textbf{Upper Bound}}\\ \hline 
unrestricted 
	& $\Omega(n \log\log n)$ & \cite{AfshaniADLMW12}
	& $O(n \log\log n)$ & \cite{AfshaniADLMW12} 
	\\ \hline
unbiased, arity $1$
	& $\Omega(n^2)$ & \cite{LehreW12}
	& $O(n^2)$ & \cite{Rudolph97} for (1+1)~EA
	\\ 
unbiased, arity $2$ 
	& $\Omega(n \log\log n)$ & 
	& $O(n \log n)$ & \cite{DoerrJKLWW11}
	\\ 
unbiased, arity $\geq 3$ 
	& $\Omega(n \log\log n)$ & 
	& $O(n \log (n)/\log\log n)$ & \cite{DoerrW11EA}
	\\ 	\hline	
ranking-based unbiased, arity $\geq 3$ 
	& $\Omega(n \log\log n)$ & 
	& $O(n \log (n)/\log\log n)$ & \cite{DoerrW11EA}
	\\ 	\hline		
elitist, arity $1$ 
	& $\Omega(n^2)$ & this paper
	& $O(n^2)$ & \cite{DrosteJW02}
	\\ 	\hline		
\end{tabular}
\end{scriptsize}
\caption[caption]{Known black-box complexities of \leadingones.  The lower bounds for higher arities follow from the lower bound in the unrestricted model.}

\label{tab:LOknown}
\end{table*}

\section{Introduction}
\label{sec:intro}

Since the seminal paper of Droste, Jansen, and Wegener~\cite{DrosteJW06} \emph{black-box complexity} is the most accepted complexity measure for black-box search heuristics such as evolutionary algorithms (EAs). Informally, the black-box complexity of a set $\F$ of functions is the minimum expected number of function evaluations that are needed to solve any problem instance $f \in \F$, where the minimum is taken over a class of algorithms $\A$. The original black-box model by Droste, Jansen, and Wegener regards as $\A$ the whole collection of possible black-box algorithms. It is therefore called the \emph{unrestricted black-box model}. 
However, unlike in classical complexity theory where a widely accepted complexity notion is used, several black-box complexity models co-exist in the theory of randomized search heuristics. 
Each model regards a different collection $\A$ of algorithms (e.g., the memory-restricted model regards only such algorithms that keep in the memory only a limited number of previously sampled search points while in contrast the algorithms in the unrestricted black-box are assumed to have full access to all previous function evaluations). 
When we compare the black-box complexity of a problem in the different models, we thus learn how certain algorithmic choices such as the population size, the variation operators in use, or the selection rules influence the performance of the search heuristics. 

At GECCO~2015 \cite{DoerrL15model} we have presented a new black-box model that combines several of the previously regarded restrictions such as the size of the memory and the selection rules. More precisely, we have defined a collection of $(\mu+\lambda)$ \emph{elitist black-box models}, where a $(\mu+\lambda)$ \emph{elitist algorithm} is one that keeps at any point in time the $\mu$ best-so-far sampled solutions. In the next iteration it is allowed to use only the relative (not absolute) function values of these $\mu$ points to create some $\lambda$ new search points. Truncation selection is then used to select from these $\mu+\lambda$ points the $\mu$ surviving ones that form the parent population of the next iteration. A $(\mu+\lambda)$ black-box algorithm is thus in particular a memory-restricted and ranking-based one in the sense of~\cite{DrosteJW06,DoerrW14memory} and~\cite{DoerrW14ranking}, respectively. In addition, it has to employ truncation selection as replacement rule. 

In~\cite{DoerrL15model} examples are presented for which the elitist black-box complexity is much larger than in any of the previously existing black-box models, while in~\cite{DoerrL15OM} it is shown that the complexity of \onemax is of linear order only even for the most restrictive (1+1) setting. Here in this work we regard another classic problem, a generalized version of the LeadingOnes function. We show that its (1+1) elitist black-box complexity matches the performance of typical evolutionary algorithms. 

\subsection{The LeadingOnes Problem}
\label{sec:lo}

LeadingOnes (\lo for short) is among the best-studied functions in the theory of evolutionary computation. It was originally designed in~\cite{Rudolph97} to disprove the conjecture of M\"{u}hlenbein~\cite{Muehlenbein92} that the expected runtime of the \oea on every unimodal function is $O(n \log n)$. While Rudolph showed experimentally that its expected runtime is $\Theta(n^2)$, this bound was proved a bit later in~\cite{DrosteJW02}. Quite exact bounds for the expected runtime of the \oea on \lo have been shown in~\cite{BottcherDN10, Sudholt13, Ladret05}. 

As argued in~\cite{DoerrW11EA}, most EAs behave symmetrically with respect to function representation, and it is therefore natural that the proven performance guarantees extend to the \emph{class of generalized \lo functions} which contains all pseudo-Boolean functions $f:\{0,1\}^n \rightarrow \R$ having a fitness landscape that is isomorphic to that of the original \lo function (which assigns to each bit string the number of initial ones, cf. Section~\ref{sec:formal} for precise definitions).  

The \oea qualifies as a (1+1) elitist algorithm in the sense of~\cite{DoerrL15model}. The above-mentioned runtime bounds therefore imply that the (1+1) elitist black-box complexity of \lo is of order at most $n^2$. We show that this is bound is tight.

\subsection{Discussion of Our Result}
\label{sec:our}

Our main result is summarized by the following statement. 
\begin{theorem}
\label{thm:LO}
The (1+1) elitist black-box complexity of \lo is $\Theta(n^2)$. 
\end{theorem}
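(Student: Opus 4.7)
The upper bound $O(n^2)$ is inherited from the classical analysis of the \oea on $\lo$ cited in the excerpt: since the \oea is permutation- and bit-invariant, its expected runtime on any $f_{\sigma,z}\in\F$ equals that on the standard $\lo$ function, namely $\Theta(n^2)$. The real work is the $\Omega(n^2)$ lower bound. My plan is to apply Yao's minimax principle with $\sigma$ uniform on $S_n$ and $z$ uniform on $\{0,1\}^n$, thereby reducing the task to lower bounding the expected runtime of a deterministic $(1+1)$ elitist algorithm $\A$ on a random instance. A standard symmetrization then lets me further assume that $\A$ is equivariant under the permutation/XOR group that preserves $\F$, removing trivial asymmetries.

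I would then derive the bound level by level. Let $T_k$ denote the first iteration at which the algorithm's stored point has fitness at least $k$; the goal is $\E[T_n]\geq\Omega(n^2)$. Decomposing $T_n=\sum_{k=0}^{n-1}(T_{k+1}-T_k)$, it suffices to establish the following level lemma: once $\A$ first reaches fitness $k$, the conditional expected number of further queries needed to reach fitness $k+1$ is $\Omega(n-k)$. Summing then yields $\Omega(n^2)$. To prove the level lemma, observe that a query $y$ can strictly improve over the current state $x$ (with $f(x)=k$) only if (i)~$y$ agrees with $x$ on the ``already identified'' positions $\sigma(1),\dots,\sigma(k)$ and (ii)~$y$ disagrees with $x$ at the ``next'' position $\sigma(k+1)$. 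The lemma therefore reduces to bounding by $O(1/(n-k))$ the probability that the algorithm's chosen $y$ satisfies both conditions.

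The hardest step, and where the paper's new information-theoretic argument must do its work, is the posterior analysis behind this probability bound. The $(1+1)$ elitist restriction forces the algorithm to keep only a single point in memory, which intuitively prevents it from accumulating fine-grained information about the unidentified bits of $\sigma$; but rejected offspring still leak information into the true posterior, and one must argue that despite this, the posterior of $\sigma(k+1)$ given the observed trajectory is still essentially uniform over a pool of size $\Omega(n-k)$. I would set this up by tracking the set of $(\sigma,z)$-pairs consistent with the observed history and showing that any offspring distribution from state $x$ overlaps the improving configurations in at most a $1/(n-k)$ fraction. A delicate point is handling ``large-flip'' queries: if $y$ differs from $x$ at many positions, condition~(ii) becomes more likely but condition~(i) becomes correspondingly less likely, so the product probability stays $O(1/(n-k))$. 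Combining the level lemma with the summation over $k$ gives the $\Omega(n^2)$ lower bound and matches the $O(n^2)$ upper bound.
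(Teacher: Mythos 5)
Your upper bound is fine, but the lower-bound plan founders on its very first step. As the paper discusses at length in Section~\ref{sec:LOproblems}, Yao's principle cannot be applied directly to the class of $(1+1)$ elitist algorithms: a randomized $(1+1)$ elitist algorithm is in general \emph{not} a convex combination of deterministic ones, because the model is memory-restricted and a deterministic algorithm in the same state must always act the same way. In fact every \emph{deterministic} $(1+1)$ elitist algorithm has infinite expected runtime on a uniformly random \lo instance (after a rejected offspring it is in exactly the same state as before and resamples the same point forever), so the reduction ``to a deterministic $(1+1)$ elitist algorithm on a random instance'' tells you nothing about randomized algorithms such as RLS. The paper circumvents this by embedding the elitist algorithms into a strictly more generous class $\AM$, in which the algorithm may remember all queries made on the current fitness level and is even shown the significant positions each time it advances a level; only for $\AM$ does the convex-combination property hold (Lemma~\ref{lem:convexcomb}), and the lower bound is then proved for $\AM$ and inherited by the elitist algorithms.

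The second gap is that your level lemma hides the entire difficulty in the phrase ``the posterior of $\sigma(k+1)$ \dots is still essentially uniform over a pool of size $\Omega(n-k)$.'' That is precisely what must be fought for: the $m=n-k$ insignificant bits of the current search point are free storage, and the algorithm can use them to encode information about which positions are significant, thereby skewing the posterior. Encoding the significant positions exactly would require about $\log\binom{n}{m}\approx m\bigl(1+\log(n/m)\bigr)$ bits against the $m$ available, so the deficit is only a constant factor, and ruling out profitable \emph{partial} encodings requires the quantitative information measure $B$ of Section~\ref{sec:LOinformation}, the reduction to one-bit flips (Lemma~\ref{lem:coins}), and the recursion and induction for $\Phi(k,m,B)$ in Lemmas~\ref{lem:recursion} and~\ref{lem:lowerboundPhi}. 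Note also that the paper only obtains the per-level bound $\Omega(n)$ for the last $\Theta(n)$ levels (those with $1<n-k\le\eps n/8$), which already yields $\Omega(n^2)$; a uniform $\Omega(n-k)$ bound over all levels, as you propose, is not available by these means, since for small $k$ the storage is ample relative to what needs to be encoded. Your remark about large flips is correct in spirit, but the paper handles it more cleanly via Lemma~\ref{lem:coins}, which converts any algorithm into a one-bit-flip algorithm at a cost of at most $m$ extra queries per level.
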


As mentioned, this bound is matched by the average performance of the \oea. It is also matched by the expected runtime of Randomized Local Search (RLS), thus showing that these two simple strategies are asymptotically optimal for \lo among all (1+1)-type elitist algorithms. Our result also shows that an algorithm trying to beat the $\Omega(n^2)$ bound (and such algorithms exist, cf.~below) has to use larger population sizes or non-elitist selection strategies. 

In our proof we will not make use of the fact that elitist algorithms have to be ranking-based; that is, the (1+1) elitist black-box complexity of \lo remains $\Theta(n^2)$ even if the algorithms have access to the absolute fitness value of the current search point.

We summarize all known black-box complexities of \lo in Table~\ref{tab:LOknown}. Relevant for our contribution are in particular the tight 
$\Theta(n \log\log n)$ bound for the unrestricted black-box complexity of \lo obtained by Afshani, Agrawal, Doerr, Doerr, Larsen, and Mehlhorn~\cite{AfshaniADLMW12}, 
the $O(n \log n)$ bound for the binary unbiased black-box complexity by Doerr, Johannsen, K{\"o}tzing, Lehre, Wagner, and Winzen,~\cite{DoerrJKLWW11}, and the 
$\Omega(n^2)$ bound for the unary unbiased model by Lehre and Witt~\cite{LehreW12}. 
Note also that already simple binary search exhibits a complexity of only $O(n \log n)$ on the \lo problem (cf.~\cite{AfshaniADLMW12} for an implementation of the binary search strategy). This is why it is remarkable that (1+1) elitist algorithms have such a rather weak performance on \lo.

Our result is not the first lower bound of quadratic order for the \lo problem. In fact, Lehre and Witt proved in~\cite{LehreW12} that all unary unbiased search strategies, i.e., intuitively speaking, all black-box algorithms using only mutation as variation operators, need $\Omega(n^2)$ function evaluations on average to optimize this problem. 
Combining our result with theirs, we see that even if we replace the mutation operator in RLS (which flips exactly one bit, chosen uniformly at random) or the (1+1)~EA (which flips independently each bit with probability $1/n$) by a---possibly strongly---biased one, the resulting algorithm would still need time $\Omega(n^2)$ on average. This shows that not only unbiased sampling, but also the population structure of the algorithm and the selection strategies determine the comparatively slow convergence of these two well-known search heuristics.  

In addition to identifying such structural bottlenecks, our result is also---and this is in fact the main motivation for our studies---interesting from a purely mathematical point of view, as we need to develop some new tools for the lower bound proof. Specifically, we use some information-theoretic arguments, utilizing that the amount of information that the algorithm has at any given point is not sufficient to make substantial progress. Such information-theoretic arguments are notoriously hard to formulate rigorously, and are even harder to employ in a non-trivial situation like ours.
 
 What complicates our analysis is the fact that the \lo functions, in principle, allow for a rather huge storage. Indeed, when the fitness of an individual is $k$ for some $k<n$, then all but the $k+1$ bits determining the fitness of the search point can be used for storing information about previous samples, the number of iterations elapsed, or any other information gathered during the optimization process. We recall that such strategies of constantly storing information about previous samples are at the heart of many upper bounds in black-box complexity, cf., for example, the proofs of the $O(n /\log n)$ bound for the (1+1) memory-restricted~\cite{DoerrW14memory}, the ranking-based~\cite{DoerrW14ranking} or the (1+1) elitist Monte Carlo~\cite{DoerrL15OM} black-box complexity of \onemax. We therefore need to show that despite the fact that changing the $n-(k+1)$ \emph{irrelevant bits} has no influence on the fitness, the algorithm cannot make effective use of this storage space. 
 
The intuitive reason why the given storage is not large enough is that by the permutation-invariance of the \lo problem the black-box algorithms do not know where the irrelevant bits are located, and storing this information would require more bits than available. However, the discrepancy is rather small: in most parts of the process, if the number of bits of the storage space was larger by just a constant factor, then this would trivially allow for efficient use of the storage, and the lower bounds would break down. It is thus essential to find a good measure for the information that an algorithm can possibly encode in its queries.
We develop a precise notion that bounds the amount of information that the algorithm has about the function $\LO_{z,\sigma}$ at any given state. We can use this notion to estimate the gain that the algorithm can draw from any given amount of information. We consider one of the main contributions of our paper to make these intuitive concepts precise and utilizable in proofs. Although our definitions are adapted to the \lo problem, we are optimistic that the developed techniques are applicable also to other black-box settings and, of course, also to other problem classes.

A second difficulty that we face in our proof is that the the minimax principle of Yao~\cite{Yao77}---which is the foremost technique in black-box complexity theory to prove lower bounds---cannot be applied to the elitist model, as discussed in~\cite{DoerrL15model}. We therefore need to find an extension of the elitist model that is generous enough to allow for the application of the information-theoretic tool but that does, at the same time, not decrease the black-box complexity by too much.

\section{Formal Definitions}
\label{sec:formal}

\begin{algorithm2e}[t]
 \textbf{Initialization:} \\
 \Indp
 $X \assign \emptyset$\;
 \For{$i=1,\ldots,\mu$}{
 Depending only on the multiset $X$ and the ranking $\rho(X,f)$ of $X$ induced by $f$, choose a probability distribution $p^{(i)}$ over $\{0,1\}^n$ and sample $x^{(i)}$ according to $p^{(i)}$\;
 $X \assign X \cup \{ x^{(i)}\}$\;
 }
 \Indm
 \textbf{Optimization:}	
 \For{$t=1,2,3,\ldots$}{
 		Depending only on the multiset $X$ and the ranking $\rho(X,f)$ of $X$ induced by $f$
	\label{line:mut}	choose a probability distribution $p^{(t)}$ on $(\{0,1\}^n)_{i=1}^{\lambda}$ and 
		sample $(y^{(1)},\ldots,y^{(\lambda)}) $ according to $p^{(t)}$\;
%
		Set $X \assign X \cup \{y^{(1)},\ldots,y^{(\lambda)}\}$\;
  \lFor{$i=1,\ldots, \lambda$}{
  	\label{line:selection} Select $x \in \arg\min X$ and update $X \assign X \setminus \{x\}$}
	 }
 \caption{The $(\mu+\lambda)$ elitist black-box algorithm for maximizing an unknown function $f:\{0,1\}^n \rightarrow \R$}
\label{alg:elitist}
\end{algorithm2e}

\subsection{Black-box Complexity} 
We come to the formal definitions. A $(\mu+\lambda)$ elitist black-box algorithm is a (possibly randomized) algorithm that can be described by the framework of Algorithm~\ref{alg:elitist}. I.e., assume that a pseudo-Boolean function $f:\{0,1\}^n \to \R$ is given, the \emph{fitness function}. Then the algorithm maintains a multiset $X$ of $\mu$ search points, which is called the \emph{population}. It is initialized by sampling iteratively $\mu$ search points, which may only depend on the previous search points and the ranking induced on them by the fitness function $f$. Afterwards, in each round it samples $\lambda$ new search points (\emph{offspring}) based only on the search points in $X$ and their ranking with respect to $f$. It then forms the new population by choosing the $\mu$ search points from the old population and the offspring that have the largest fitness, where it may break ties arbitrarily. This process is repeated until a maximum of $f$ is sampled. 

The \emph{runtime} of a $(\mu+\lambda)$ elitist black-box algorithm $A$ on a pseudo-Booloean $f$ is the number of search points (\emph{queries}) that $A$ samples before it samples for the first time a maximum of $f$. The expected runtime of $A$ on a class $\mathcal F$ of pseudo-Boolean functions is the maximum expected runtime of $A$ on $f$, where $f$ runs through $\mathcal{F}$. The $(\mu+\lambda)$ elitist black-box complexity of $\mathcal F$ is the smallest expected runtime of a $(\mu+\lambda)$ elitist black-box algorithm on $\mathcal F$.
  
\begin{remark}
In~\cite{DoerrL15model} we distinguished between two different notions of runtime, which in turn lead to different notions of (elitist) black-box complexity: the \emph{Las Vegas} runtime of a black-box algorithm $A$ on a function $f$ is the expected number of steps until $A$ finds the optimum of $f$, while the \emph{$p$-Monte Carlo} runtime is the minimal number of steps $A$ needs in order to find the optimum of $f$ with probability at least $1-p$. It was shown that there may be an exponential gap between the resulting elitist black-box complexities. However, this is not the case for \lo. Formally, we show for \lo that the (1+1) elitist Las Vegas black box complexity is $\Omega(n^2)$, and for every constant $p>0$ the (1+1) elitist $p$-Monte Carlo black box complexity is $\Omega(n^2)$. Both statements are immediate consequences of Theorem~\ref{thm:runtime}.

In this paper, black-box complexity refers to the Las Vegas version unless specified otherwise.
\end{remark}

\subsection{LeadingOnes}
The original \lo function is defined via $\lo:\{0,1\}^n \rightarrow \R, x \mapsto \max \{ i \in \{0,1,\ldots,n\} \,\mid\, \forall j \leq i: x_j=1\}$. 
This is generalized to a permutation- and bit-invariant version in the following way. 
Let $S_n$ denote the set of permutations of $[n] := \{1, \ldots, n\}$ and 
let $[0..n]:=\{0,1, \ldots, n\}$. 
For $z \in \{0,1\}^n$ and $\sigma \in S_n$ we consider the function
\begin{align*} 
\LO_{z,\sigma}: \{0,1\}^n  \rightarrow & \ [0..n],\\
 x \mapsto & 
\max \{ i \in [0..n]\mid \forall j \leq i: z_{\sigma(j)} = x_{\sigma(j)}\}\,,
\end{align*}
so $\LO_{z,\sigma}(x)$ is the length of the longest common prefix of the search point $x$
and the \emph{target string} $z$ in the order $\sigma$. For $i<j$ we say that $\sigma(i)$ is \emph{more significant} than $\sigma(j)$. In particular, $\sigma(1), \ldots,\sigma(k)$ are the $k$ most significant bits, and $\sigma(n),\ldots, \sigma(n-k+1)$ are the $k$ least significant bits. 
The \lo problem is the problem of optimizing an unknown member of the class $\lo := \{\LO_{z,\sigma} \mid z \in \{0,1\}^n, \sigma \in S_n \}$. Clearly, the unique global optimum of $\LO_{z,\sigma}$ is $z$. For ease of notation we drop the subscript $\{z, \sigma\}$ in the following.

\subsection{Other Notation and Definitions} 
Throughout this work, we will write $\log x$ for the binary logarithm $\log_2 x$ and $\ln x$ for the natural logarithm of $x$. We say that an event $\mathcal{E}$ holds \emph{with high probability} if $\Pr[\mathcal E] \to 1$ for $n\to\infty$.

A \emph{fitness level} of a function $f$ is a maximal set of search points which have the same fitness. In particular, every function $\LO_{z,\sigma}$ has exactly $n$ different fitness levels.

\section{Proof of the Lower Bound}
\label{sec:LOlower}

To prove the desired $\Omega(n^2)$ bound we introduce a more generous model in which the algorithms have strictly more power than in the original elitist model (Sections~\ref{sec:LOmodelalternative}--\ref{sec:LOinformation}). We then show the $\Omega(n^2)$ lower bound for this more generous model in Section~\ref{sec:LOruntimelower}. 
Before we start with the technical details of the alternative model, we explain in Section~\ref{sec:LOproblems} our motivation for introducing it. 
We present a high-level overview of the proof in Section~\ref{sec:LOhighlevel}. 

\subsection{Challenges in Proving the Lower Bound}
\label{sec:LOproblems}

\textbf{Learning from search points with inferior fitness: } 
One may be tempted to believe that in the (1+1) elitist model we cannot learn information from search points that have stricty lower fitness than that of the current best one. This is indeed a tantalizing thought as such search points have to be discarded immediately and can therefore not influence the sampling distribution of the next query. However, one has to be very careful with such arguments. To illustrate why it fails in general, consider the following setting: assume that there is a search point $x$ from which we sample search point $y_1$ with some very small probability $\eps$ and search point $y_2$ otherwise; i.e., we sample $y_2$ with probability $1-\eps$. 
For the sake of the argument assume further that for all search points $z \neq x$ the probability to sample $y_1$ or $y_2$ is zero. If at some stage of the algorithm we happen to have $y_1$ in the memory we may then conclude that we must have been at $x$ in the previous step. Moreover, if $f(y_2) > f(x)$ then with probability $1-\eps$ we would have proceeded to $y_2$, and thus we would never have visited $y_1$ (as we cannot return to $x$ from a fitter search point). Therefore, by Bayes' theorem $f(y_2) \leq f(x)$ with probability at least $1-\eps$. Summarizing, although we have not visited $y_2$, we can deduce information about its fitness.

\textbf{Application of Yao's Principle: }
A tool that has proven to be extremely helpful in deriving lower bounds for black-box complexities is the so-called MiniMax Principle of Yao~\cite{Yao77}. 
All lower bounds that we are aware of directly or indirectly use (the easy direction of) this tool. In simple words, Yao's Principle allows us to restrict our attention to the performance of a best-possible deterministic algorithm on a random input. This is a lower bound for the expected performance of a best possible randomized algorithm for this problem. 
This principle is typically used with a very simple distribution $p$. Indeed, in most proofs $p$ can be chosen to be the uniform distribution. 

\begin{lemma}[Yao's Principle~\cite{Yao77,MotwaniR95}]
\label{lem:Yao}
Let $\Pi$ be a problem with a finite set $\I$ of input instances (of a fixed size) permitting a finite set $\A$ of deterministic algorithms. Let $p$ be a probability distribution over $\I$ and $q$ be a probability distribution over $\A$. Then, 
\begin{align}\label{eq:Yao}
	\min_{A \in \A} \E[T(I_p, A)] \leq \max_{I \in \I} \E[T(I,A_q)]\, , 
\end{align}
where $I_p$ denotes a random input chosen from $\I$ according to $p$, $A_q$ a random algorithm chosen from $\A$ according to $q$ and $T(I,A)$ denotes the runtime of algorithm $A$ on input $I$.
\end{lemma}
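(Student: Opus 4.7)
The plan is to sandwich the stated inequality between a single auxiliary quantity: the doubly averaged expected runtime
$R(p,q) := \E_{I \sim p,\, A \sim q}[T(I, A)] = \sum_{I \in \I, A \in \A} p(I)\, q(A)\, T(I, A)$.
Both the left-hand side and the right-hand side of \eqref{eq:Yao} can be compared directly to $R(p,q)$, and each comparison will reduce to the elementary fact that a convex combination of real numbers lies between the minimum and the maximum of the summands. This transforms an asymmetric minimax-style claim into two symmetric one-line estimates.

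First I will establish that $\min_{A \in \A} \E[T(I_p, A)] \leq R(p,q)$. Here I reorder summation to write $R(p,q) = \sum_{A \in \A} q(A)\, \E[T(I_p, A)]$, which expresses $R(p,q)$ as a convex combination, with weights $q(A)$, of the nonnegative numbers $\E[T(I_p, A)]$ indexed by $A \in \A$. Any such convex combination is at least the minimum of the summands, and the finiteness of $\A$ guarantees that this minimum is attained by some deterministic algorithm.

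By a fully symmetric argument I will then show that $R(p,q) \leq \max_{I \in \I} \E[T(I, A_q)]$. This time I reorder summation as $R(p,q) = \sum_{I \in \I} p(I)\, \E[T(I, A_q)]$ and use that a convex combination of real numbers is at most the maximum of the summands, again invoking finiteness of $\I$. Concatenating the two estimates yields $\min_{A \in \A} \E[T(I_p, A)] \leq R(p,q) \leq \max_{I \in \I} \E[T(I, A_q)]$, which is exactly \eqref{eq:Yao}.

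I do not expect any serious obstacle. This is the ``easy direction'' of Yao's principle: the whole argument is a Fubini-style exchange of summation combined with the trivial bounds min $\leq$ average $\leq$ max, and the finiteness assumptions on $\I$ and $\A$ together with $T \geq 0$ make the bookkeeping immediate. The harder direction, which would upgrade the inequality to an equality under a suitable choice of $p$ and $q$, requires LP duality or von Neumann's minimax theorem, but that stronger statement is not needed for how the lemma will be used in the lower bound proof.
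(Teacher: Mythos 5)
The paper does not prove this lemma; it is imported verbatim from the literature (Yao~\cite{Yao77}, see also~\cite{MotwaniR95}), so there is no in-paper proof to compare against. Your argument is the standard and correct proof of the easy direction: with $T\geq 0$ the Tonelli-style exchange of the two finite sums is unproblematic, and the two bounds $\min \leq$ average $\leq \max$ for convex combinations give exactly \eqref{eq:Yao}; finiteness of $\I$ and $\A$ ensures the min and max are attained. One small point worth keeping in mind for this particular paper: as Section~\ref{sec:LOproblems} discusses, $T(I,A)$ may be infinite for deterministic elitist algorithms, but your argument survives in the extended reals, so nothing breaks.
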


As was pointed out in~\cite{DoerrL15model}, the informal interpretation of Yao's principle as stated before Lemma~\ref{lem:Yao} does not apply to elitist algorithms. Since this is a crucial difficulty in our proofs, we explain this apparent contradiction in detail, even though a very similar example was given in~\cite{DoerrL15model}. 

Let $\I=\lo$, let $p$ be the uniform distribution over the inputs $\I$, and let $I_p$ as in Lemma~\ref{lem:Yao}.
Then any deterministic algorithm $A$ has a positive probability during the optimization of $I_p$ of getting stuck. Assume $x$ and $y$ are the first two search points that $A$ queries, and note that since $A$ is an elitist black-box algorithm, the choice of $y$ does not depend on the fitness of $x$ (although the example could easily be adapted to cover this case as well). If the fitness of $y$ is strictly smaller than that of~$x$, $y$~has to be discarded immediately and the algorithm is in exactly the same state as before, and will continue sampling and discarding $y$. Therefore, the expected runtime of $A$ on this fitness function is infinite. Since this holds for every deterministic algorithm (that is, every deterministic (1+1) elitist algorithm has an infinite expected runtime on a uniformly chosen \lo instance), the lower bound in (\ref{eq:Yao}) is infinite, too. 
However, of course there are randomized search strategies with finite expected runtime, e.g., RLS and the \oea (see Section~\ref{sec:lo}). 

To resolve this apparent discrepancy, note that Lemma~\ref{lem:Yao} makes a statement about algorithms that are a convex combination of deterministic ones. For typical classes of algorithms this describes exactly the class of randomized algorithm, since we can emulate every randomized algorithm by making all random coin flips in advance, and then choosing the deterministic algorithm whose decisions in each step agree with these coin flips. However, this only works if the algorithm is free to make a new decision in each step, e.g., if the algorithm may base its decision on the number of previous steps. However, (1+1) black-box algorithms (elitist or not) may not do so since they are \emph{memory-restricted}. Therefore, randomized (1+1) black-box algorithms cannot be in general written as convex combinations of deterministic (1+1) black-box algorithms.

These observations have a quite severe effect on our ability to prove lower bounds in elitist black-box models. Indeed, the only way we currently know is the approach taken below, where we consider a superset $\AM$ of algorithms such that every randomized algorithm in $\AM$ can be expressed as a convex combination of deterministic ones. A lower bound shown for this broader class trivially applies to all elitist black-box algorithms. In our case, we achieve the class $\AM$ by giving the algorithms access to enough memory to determine the current step (within a certain phase).

If applicable, Yao's principle allows us to restrict ourselves to deterministic algorithms, which are usually easier to analyze. In particular, we may use the following observation. 
\begin{remark}
\label{rem:uniform}
Assume that we run a deterministic algorithm $A$ on a problem instance $i$ that we have taken from the set of instances $\I$ uniformly at random, so $\Pr[i = c] = 1/|\I|$ for every $c\in \I$. 
Assume further that the first queries $q_1,\ldots,q_\ell$ of $A$ reduce the number of possible problem instances to some set $\Ce$. Then $\Pr[i=c \mid q_1,\ldots,q_\ell] = \Pr[i=c \mid i \in \Ce] = 1/|\Ce|$ for all $c \in \Ce$. In particular, each $c \in \Ce$ is equally likely to be the secret instance $i$. 
\end{remark}

\subsection{High-Level Ideas of the Proof}
\label{sec:LOhighlevel}
As explained above, we cannot use Yao's principle directly for the set of all elitist black-box algorithms. Instead, we use a larger class \AM~of algorithms, the definition of which is adapted to the special structure of the \lo problem. In this model, whenever an algorithm reaches fitness level $k$ for the first time, we reveal for a brief moment the position of the $k$ most significant bits. Note that by symmetry of the \lo function, an algorithm cannot discriminate between the less significant bits from its previous samples. Therefore, everything that the algorithm has learned previously is covered by this piece of information. Based on this information, we allow the algorithm to ``store'' whatever it wants in the $m=n-k$ least significant bits. After that, we occlude the information about the $k$ significant bits again, and the algorithm may only use its storage of size $m$. However, until the algorithm finds the next fitness level we allow it to keep track of all the search points that it visits on the current fitness level. In this way we create a class of algorithm for which Yao's principle allows us to restrict to deterministic algorithms. The details are spelled out in Section~\ref{sec:LOmodelalternative}.

The algorithm may be lucky and skip a fitness level, because the $(k+1)$-st significant bit in its search point is correct. However, this only happens with probability $1/2$. Otherwise, the algorithm can only carry over $m$ bits of information to the next level, cf.~Lemma~\ref{lem:infoatstart} for a precise statement. Crucially, if $m = \delta n$ for some small $\delta >0$, then this information is not enough to encode the positions of the $k$ most significant bits, which would require $\log \binom{n}{k} = \log \binom{n}{m} \approx m(1+\log(1/\delta))$ bits. One strategy of the algorithm might be to store as many of the insignificant bit positions as possible, so that it can test quickly whether one of these candidates is the next significant bit. However, whenever the algorithm wants to be certain (or ``rather certain'') that a specific bit $b$ is insignificant, then this decreases the available information about the remaining bits. This strategy might pay off if $b$ is the next significant bit, because then the algorithm reaches a new fitness level immediately. However, with a too high probability $b$ is not the next significant bit, and the algorithm is left with an even worse situation.

The key to the proof lies in the exact definition of the class \AM, and in a precise way to capture the rather vague notation of information used above. We start by defining \AM~in Section~\ref{sec:LOmodelalternative}. In Section~\ref{sec:LOrefinedcost} we show that by restricting to one-bit flips, we extend the runtime of an algorithm on the $k$-th fitness level by at most $m=n-k$. In Section~\ref{sec:LOinformation} we first give a precise definition of what we mean by information, and we define the quantity $\Phi(k,m,B)$ to be the minimum expected number of queries that an algorithm in $\AM$ using one-bit flips needs to advance a fitness level, if there are $k$ significant and $m$ insignificant bits and if the available information is $B$. In Lemma~\ref{lem:recursion} we then give a rather straight-forward recursion for the function $\Phi(k,m,B)$. Once the recursion is established, it is purely a matter of (somewhat tedious) algebra to derive the lower bound $\Phi(k,m,B) \geq \eps (k+m)(1-(\log B)/(2m))$ in Lemma~\ref{lem:lowerboundPhi}. Since the starting amount of information is $B \approx 2^m$, each algorithm using one-bits flips needs to spend expected time $\Phi(k,m,2^m) \approx \eps (k+m)/2$ on the corresponding fitness level (provided that it visits this level at all). Since using multi-bit flips can save us at most $m$ queries, a general algorithm in \AM~spends at least time $\approx \eps (k+m)/2 - m$ on this level, which is $\Omega(n)$ if $m \leq \delta n$ for a sufficiently small $\delta >0$. Thus there is a linear number of fitness levels, such that the algorithm spends an expected linear time on each of them, showing the $\Omega(n^2)$ runtime. The details of this concluding argument are found in Section~\ref{sec:LOruntimelower}.

\subsection{A More Generous Model}
\label{sec:LOmodelalternative}

We consider the set \AM~of all algorithms that can be implemented in the following model~$\mathcal{M}$:
 
Assume that the algorithm has queried some search points $x^{(1)}, \ldots, x^{(t)}$ with $\max_{i<t}\{\LO(x^{(i)})\}=k-1$ and $\LO(x^{(t)}) \geq k$. 
We say that the algorithm \emph{reaches a new fitness level} with the $t$-th query. In addition to letting the algorithm know that the fitness value of $x^{(t)}$ is strictly larger than the previous best search point, 
we reveal to the algorithm the first $k$ \emph{significant} positions $\sigma(1), \ldots, \sigma(k)$ and the corresponding bit values $z_{\sigma(1)}, \ldots, z_{\sigma(k)}$ of the target string. Note that from this information the algorithm can in particular infer that $\LO(x^{(t)}) \geq k$, but it does not learn the precise function value of $x^{(t)}$. Furthermore, it is not difficult to see that the information revealed to the algorithm contains everything about the unknown instance $(z,\sigma)$ that the algorithm could have collect so far (and typically it reveals much more information about the target instance than the information currently present to the algorithm).
We now allow the algorithm to ``revise'' its choice of the \emph{insignificant} $m:=n-k$ bits of $x^{(t)}$. 
That is, the algorithm may opt to change the entries in the positions $[n]\setminus \{\sigma(1), \ldots, \sigma(k)\}$, thus creating a new search point $\tilde{x}^{(t)}$.  

By construction, the fitness of $\tilde{x}^{(t)}$ is at least $k$. It is possibly strictly greater than $k$ in which case the algorithm may again revise the entries of the insignificant bits, now based on the first $k+1$ positions. This process continues until the fitness of the revised search point equals the number of significant bit positions that the algorithm has seen when creating it. 
For ease of notation, let us assume that $f(\tilde{x}^{(t)}) = k$. For clarity we emphasize that the algorithm never learns about the exact fitness value of its original choice $x^{(t)}$.

Starting from $y^{(0)} := \tilde{x}^{(t)}$, until it reaches a new fitness level $>k$ we allow the algorithm to remember all queries $y^{(1)}, \ldots, y^{(s)}$, and for each of them whether $f(y^{(i)})$ is smaller or whether it is equal to $f(y^{(0)})$ (if $f(y^{(i)})$ is larger than $f(y^{(0)})$, a new fitness level is reached). 
Moreover, we allow it to remember the value $k$. 
The algorithm may thus choose $y^{(s+1)}$ depending on $k$, $y^{(0)}, \ldots y^{(s)}$, and on the information which of the $y^{(i)}$ have smaller fitness than $y^{(0)}$. Crucially, note that in this phase the algorithm does no longer have access to the positions $\sigma(1),\ldots,\sigma(k)$ of the first $k$ significant bits, unless it has somehow encoded this information implicitly, for example in $y^{(0)}$.

We want to bound from below the expected number of queries that are needed to reach a new fitness level, that is, the expected number $T_m$ of queries before the algorithm queries a search point of fitness strictly larger than $k = n-m$. Note that this number may be zero if $f(\tilde{x}^{(t)}) >k$. 
We shall apply Yao's MiniMax Principle with the uniform distribution over the possible \lo instances $(z,\sigma)$. A discussion of this tool has been given in Section~\ref{sec:LOproblems}. 
We will show next that, unlike for the original elitist model, in $\AM$ every (reasonable) randomized algorithm is a convex combination of deterministic ones, the crucial difference to the original elitist model being that in $\AM$ the algorithms may remember the search points of the current fitness level. In particular, a reasonable deterministic algorithm therefore never ``gets stuck''. Formally, we get the following statement.

\begin{lemma}\label{lem:convexcomb}
Assume that $A\in \AM$ is a randomized algorithm that never samples a search point twice on the same fitness level. Then $A$ is a convex combination of deterministic algorithms in $\AM$.
\end{lemma}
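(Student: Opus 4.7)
The plan is to apply the standard principle of deferred decisions. The appropriate notion of \emph{state} for an $\AM$-algorithm is the entire collection of data on which it is allowed to condition its next action: within a phase on level $k$, the state consists of $k$, the on-level history $y^{(0)},y^{(1)},\ldots,y^{(s)}$ and the relative-fitness flags telling which of the $y^{(i)}$ have fitness strictly less than $y^{(0)}$; at a level transition it consists of the revealed significant positions $\sigma(1),\ldots,\sigma(k)$ and bits $z_{\sigma(1)},\ldots,z_{\sigma(k)}$, together with whatever the algorithm carried across the transition (implicitly through the choice of $y^{(0)}$). Let $\mathcal T$ denote the (countable) set of all such states. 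For each $\tau\in\mathcal T$ I pre-draw an independent uniform $U_\tau\in[0,1]$ and collect them into a random tape $r=(U_\tau)_{\tau\in\mathcal T}$. In state $\tau$, the randomized algorithm $A$ prescribes some distribution $D_\tau$ over next actions; I fix once and for all a measurable map turning a $[0,1]$-uniform into a sample of $D_\tau$, and define the deterministic algorithm $A_r$ so that whenever it reaches state $\tau$ it outputs the image of $U_\tau$ under that map.

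The proof then reduces to two checks. First, $A_r\in\AM$ for every realisation of $r$: at every step $A_r$ consults only the current state $\tau$ and the externally fixed number $U_\tau$. Because $r$ is drawn before the run begins and is independent of the hidden data $(z,\sigma)$ and of anything outside $\tau$, the algorithm $A_r$ obeys exactly the conditioning restrictions that define $\AM$. Second, the mixture of the $A_r$'s over uniformly chosen $r$ must reproduce the distribution of $A$ on every target function. For this it suffices that in any single execution no state $\tau$ recurs, so that the $U_\tau$'s actually consumed along the run are mutually independent. This is precisely where the hypothesis of the lemma enters: within a single phase the on-level history is part of the state and grows by one entry per step, and since by assumption no search point is sampled twice on the same level these entries are pairwise distinct, so no state recurs inside a phase; across phases, the level index $k$ is part of the state and strictly increases, so states of different phases are also distinct. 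Hence along any run the visited states are pairwise different, the consumed $U_\tau$'s are independent, and the law of $A_r$ averaged over $r$ matches the query-by-query law of $A$ --- which is exactly the claimed convex combination.

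The step I expect to need most care is identifying the state correctly: it must be defined so that it really captures everything an $\AM$-algorithm may legally look at, and simultaneously so that the look-up $\tau\mapsto U_\tau$ cannot smuggle in additional information through the back door (in particular it must not depend on a global step counter or on $(z,\sigma)$). The former is a direct reading of the model in Section~\ref{sec:LOmodelalternative}; the latter is automatic once the state has been defined correctly, since the look-up uses nothing beyond $\tau$. Everything else is routine bookkeeping: once the state has been pinned down, the argument is the familiar derandomization via deferred decisions.
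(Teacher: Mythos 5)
Your proposal is correct and follows essentially the same route as the paper: both derandomize by pre-drawing all randomness in advance and both hinge on the observation that, because an $\AM$-algorithm remembers its full on-level history and the level index only increases, no state can recur within a run, so the fixed-randomness algorithm is a legitimate deterministic member of $\AM$ and the mixture reproduces the law of $A$. The only cosmetic difference is that you index the pre-drawn randomness directly by the state $\tau$, whereas the paper indexes coins by the pair (fitness level, number of coins requested so far on that level) and argues that the state determines this index --- the same idea in slightly different bookkeeping.
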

Before we prove Lemma~\ref{lem:convexcomb}, we recall that a (deterministic or randomized) algorithm $A \in \AM$ is allowed to remember all previous queries on the same fitness level, so it will know whether it is about to sample the same search point twice on the same fitness level. Its runtime can only increase by sampling the same search point twice, so for proving lower bounds we can restrict to algorithms $A$ as in the lemma. The reason why we actually need this restriction is that there is only a finite number of possible executions (i.e., of sequences of queries) of $A$ if we forbid multiple sampling. Otherwise the number of possible executions would be uncountable, which causes some problems that we want to avoid.

\begin{proof}[Proof of Lemma~\ref{lem:convexcomb}]
Assume we have a randomized algorithm $A \in \AM$. All random
decisions of $A$ can be based on a sequence of random coin flips. (Or,
depending on the model of computation, on a sequence of real random
variables in the interval $[0,1]$. Here we will assume coin flips.)

For simplicity, we will first argue that for every $N>0$, the algorithm
$A$ in the phase before the $N$-th coin flip can be obtained as a convex
combination of (or, more precisely, a probability distribution
over) deterministic algorithms. In other words, if we only consider the
execution of $A$ up to the $N$-th coin flip,  then the randomized
algorithm is just a randomly chosen deterministic one. In the following
paragraph we will thus only regard the execution of $A$ until it
requests the $N$-th coin flip.

We can emulate $A$ as follows. At the very beginning (before
actually running $A$), we flip $nN$ coins, which we denote by $F_{i,j}$,
where $1\leq i \leq n$ runs through all fitness levels, and $1\leq j
\leq N$. We now run $A$ as follows. When it is on the $i$-th fitness
level and it asks for the $j$-th random bit on this fitness level, then
we feed it the bit $F_{i,j}$. Note that although there are $Nn$ random
bits available in total, $A$ will use at most $N$ of them (in the part
of the execution that we consider).

Now that the $F_{i,j}$ are fixed, the algorithm $A$ is just a
deterministic algorithm. The subtle point here is that we can also
realize it as a deterministic algorithm $AÕ$ in the class
$\mathcal{A}_M$. This is because whenever the algorithm $A'$ is in a
specific state (i.e, it has a specific current search point with
specified fitness, and it has some content in the memory that is allowed
for the class $\mathcal{A}_M$), then the available information suffices
to determine the number of queries in this fitness level. In particular,
regardless of what $A'$ has done in previous steps, the current state is
different from all previous states. (This is the crucial difference to
the elitist model, in which a deterministic algorithm may come to the
same state twice.) So for all possible continuations for $A$ in the same
state, there is a deterministic algorithm that follows either
continuation. Therefore, once the $F_{i,j}$ are fixed we can select a
deterministic algorithm $AÕ$ from $\AM$ that behaves like $A$
for the first $N$ coin flips.

For unlimited $N$ we use the same argument as before, only that we flip an infinite
sequence of coins for each fitness level. Still, every outcome $(F_{i,j})_{1\leq i \leq n, j \geq 1}$
corresponds to exactly one deterministic algorithm, which also does not query the same search point twice on the same fitness level. Hence, every such
deterministic algorithm is chosen with some probability, where
the probabilities add up to $1$. The argument now runs as before.
\end{proof}

We show that there is a constant $\eps >0$ such that for all $1 < m \leq \eps n$ and all deterministic algorithms in $\AM$ the expected number of queries that the algorithm spends on fitness level $k=n-m$ is at least $\eps n$. 
This yields the desired $\Omega(n^2)$ lower bound. 
By the following lemma, this bound also holds for all elitist $(1+1)$ algorithms. 

\begin{lemma}
Every elitist $(1+1)$ algorithm is also in $\AM$.
\end{lemma}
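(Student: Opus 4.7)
The plan is to show that any $(1+1)$ elitist algorithm $A$ can be simulated step-for-step by an algorithm $A' \in \AM$ producing the same query sequence with the same distribution; since $\AM$ is defined purely in terms of what information is made available to the algorithm, this establishes the desired inclusion. The underlying observation is that the model $\mathcal{M}$ only \emph{adds} information relative to the $(1+1)$ elitist framework---namely the positions and values of significant bits upon reaching a new fitness level, the option to revise insignificant bits, the memory of all queries on the current fitness level, and the fitness level index $k$---so $A'$ can mimic $A$ simply by discarding all of these extras.

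Concretely, $A'$ will never exercise the option to revise insignificant bits, so $\tilde{x}^{(t)} = x^{(t)}$ at every level transition. Internally, $A'$ maintains a single ``virtual current best'' $v$, initialized to the first query and reset to $x^{(t)}$ whenever a new fitness level is reached. To produce its next query, $A'$ ignores $k$ and the entire history $y^{(0)}, \ldots, y^{(s)}$, and instead samples from the distribution that $A$ would use given current best $v$. When $\mathcal{M}$ reports the comparison between $f(y^{(s+1)})$ and $f(v) = f(y^{(0)})$, $A'$ processes it as $A$ would process the analogous one-bit selection information: strictly worse queries are discarded and $v$ is kept, equal-fitness queries update $v$ to $y^{(s+1)}$ (or not) according to $A$'s tie-breaking rule, and strictly better queries trigger a transition to the next fitness level with $v$ reset to $y^{(s+1)}$.

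By construction, the conditional distribution of $A'$'s next query given its history equals that of $A$ given the same current best, and the update rule for $v$ agrees with $A$'s selection rule, so $A$ and $A'$ define the same stochastic process on query sequences. The initial query is handled identically, since for $(1+1)$ the first sampling distribution depends only on the empty multiset and can therefore be reproduced verbatim by $A'$. The argument is essentially bookkeeping, reflecting that $\mathcal{M}$ was deliberately designed to be uniformly more permissive than the $(1+1)$ elitist framework; the only point requiring care is verifying that every piece of information that $A$ actually consults---its single current point and the sign of the latest comparison---is still available to $A'$ in $\AM$, which it is by construction. Hence $A \in \AM$.
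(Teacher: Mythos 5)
Your proposal is correct and follows essentially the same route as the paper's proof: simulate the elitist $(1+1)$ algorithm within $\AM$ by setting $\tilde{x}^{(t)} := x^{(t)}$ and discarding all the extra information the model $\mathcal{M}$ provides, noting that the comparison outcome (worse/equal/better than the current point of fitness $f(y^{(0)})$) is exactly what truncation selection would supply. Your additional bookkeeping with the virtual current best $v$ and the tie-breaking rule just makes explicit what the paper leaves implicit.
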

\begin{proof}
This follows rather trivially from the definition of $\AM$. An elitist $(1+1)$ algorithm can be simulated by an algorithm in $\AM$ by choosing $\tilde x^{(t)} := x^{(t)}$, and by ignoring all information except for the current search point. Note that for a (1+1) elitist algorithm it suffices that the oracle tells it which of the two search points it compares is the better one, or whether they are of equal fitness. The (1+1) elitist algorithm will thus not know more about the search points $y^{(i)}$ than whether the fitness is worse, equal, or better than the fitness of $y^{(0)}$. Thus the (1+1) elitist algorithm has always at most the information that an algorithm in $\AM$ has.
\end{proof}

\subsection{It Suffices to Study Single Bit-Flips}
\label{sec:LOrefinedcost}

One technical challenge in bounding the complexity of \lo with respect to $\AM$ is the question of how to deal with multiple bit flips. The following lemma tells us that we do not give away much if we restrict ourselves to algorithms that only use one-bit flips. This observation simplifies the upcoming computations significantly. 
%

\begin{lemma}
\label{lem:coins}
For every deterministic algorithm $A \in \AM$ there exists a deterministic algorithm $A'\in \AM$ such that the following holds. 
If for some instance algorithm $A$ uses $s$ queries to leave fitness level $k$, then $A'$ uses at most $s+n-k$ queries on fitness level $k$. 
Moreover, all the search points $y'^{(1)}, \ldots, y'^{(r)}$ that $A'$ uses on this fitness level have Hamming distance one from $y^{(0)}$.
\end{lemma}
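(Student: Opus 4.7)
The plan is to construct $A'$ as a simulator of $A$ that replaces each multi-bit query by a short sequence of single-bit flips of $y^{(0)}$. The central observation is that a single-bit flip of $y^{(0)}$ at position $i$ reveals the category of bit $i$ completely: the fitness drops below $k$ iff $i\in\{\sigma(1),\ldots,\sigma(k)\}$ (call these \emph{significant} bits), it stays at $k$ iff $i\in\{\sigma(k+2),\ldots,\sigma(n)\}$ (the \emph{very insignificant} bits), and it rises above $k$ iff $i=\sigma(k+1)$, in which case $A'$ immediately leaves the level.

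First I will define $A'$ as follows. It maintains an internal emulator of $A$'s state. Whenever $A$ wishes to query $y^{(t)}$, let $D_t=\{i:y^{(t)}_i\neq y^{(0)}_i\}$. If the outcome of $y^{(t)}$ is already determined by the categories that $A'$ has observed so far (either $D_t$ contains a bit known to be significant, or every bit of $D_t$ has already been categorized), then $A'$ feeds the implied outcome to the emulator and issues no new query. Otherwise $A'$ probes the uncategorized bits of $D_t$ by single-bit flips, in some fixed order, stopping as soon as either it discovers a significant bit (so $\LO(y^{(t)})<k$), it discovers $\sigma(k+1)$ (so $A'$ leaves the level), or every bit of $D_t$ has been categorized (the resulting categories then uniquely determine the outcome).

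Next I will bound the total number of queries that $A'$ performs on fitness level $k$ by classifying each single-bit flip according to the category of the tested bit. There is exactly one flip of $\sigma(k+1)$, namely the final one that makes $A'$ leave. The number of flips of very insignificant bits is at most $m-1$, since there are only $m-1$ such bits and each of them is tested at most once during the whole execution. The number of flips of significant bits is at most $s-1$: such a flip can only happen while processing an $A$-query whose outcome is ``$<k$'', the probing terminates the moment the first significant bit of $D_t$ is found, and among $A$'s $s$ queries at most $s-1$ have outcome ``$<k$'' because the last one has outcome ``$>k$''. Summing the three contributions yields $1+(m-1)+(s-1)=s+m-1\le s+m$, as claimed.

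The main obstacle I anticipate is the careful verification that $A'$ is a legitimate deterministic algorithm in $\AM$: its decisions must depend only on the emulated state of $A$ together with the outcomes of its own single-bit flips on the current fitness level, which is precisely the information $\AM$ permits to be stored. Small additional points to check are that $A'$ never repeats a search point on the current fitness level (which is automatic, as each probe targets a previously untested bit), that $A$-queries whose outcome is already implied by earlier observations cost $A'$ no new query, and that if $A'$ encounters $\sigma(k+1)$ while emulating some $y^{(t)}$ with $t<s$ it simply terminates earlier, which only improves the bound.
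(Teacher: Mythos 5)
Your proposal is correct and is essentially the paper's own proof: $A'$ emulates $A$, replaces each multi-bit query by single-bit probes of the not-yet-categorized flipped positions (stopping as soon as the outcome is determined), and charges each probe either to an insignificant bit (at most $m=n-k$ in total, since no bit is probed twice) or to a query of $A$ whose answer is ``fitness dropped'' (at most one probe each). The only differences are cosmetic: your trichotomy of bit categories and the resulting bound $s+m-1$ are a slightly sharper bookkeeping of the same $s+m$ count.
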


\begin{proof}
Let $A \in \AM$ be deterministic, and assume it queries $y^{(0)},y^{(1)}, \ldots, y^{(s)}$ on fitness level $k$. 
The algorithm $A'$ also starts with $y^{(0)}$. 
For $i \in [s]$ let $\ell_i \in [0..n]$ be such that $y^{(i)}$ differs from $y^{(0)}$ in $\ell_i$ bits. 
In the $i$-th \emph{step}, which may consist of several queries, $A'$ goes through these $\ell_i$ bits, flipping them one by one (always starting with $y^{(0)}$) and querying the resulting strings until it finds a string that has fitness smaller than $k$, or until it has exhausted the $\ell_i$ bits. 
If $A'$ creates a string that it queried in one of the previous $i-1$ steps, it does not query this string again. Note that this is possible in the model $\AM$, but would not be possible in the (1+1) elitist model.

We need to show two things: 
\textbf{(i)} after each step, $A'$ has at least as much information as $A$ has, so that it knows which query $y^{(i+1)}$  algorithm $A$ will choose next, and 
\textbf{(ii)} $A'$ uses at most $s+m$ queries, where $m=n-k$. 
In fact, we show that each of the $m$ insignificant bits causes at most one additional query for $A'$. Assume first that $A'$ exhausts the $\ell_i$ bits in step $i$, i.e., that all the $\ell_i$ corresponding strings have fitness at least $k$. In this case $A$ learns that the fitness of $y^{(i)}$ is at least $k$ (and possibly that it is strictly larger than $k$), and $A'$ learns the same. Moreover, $A$ has done one query, while $A'$ has done one query for each of the at most $\ell_i$ insignificant bits that have not been evaluated before. 

Next assume that in the $i$-th step $A'$ finds a string that has fitness smaller than $k$. Say it is the $\ell'$-th string that $A'$ queries in the $i$-th step. Then $A$ only learns that among the $\ell_i$ flipped bits there is at least one significant bit. $A'$ learns this, too, but it also learns specifically the position of such a bit. 
Algorithm $A$ uses one query, while $A'$ uses $\ell'\leq \ell_i$ queries, spending again one query for each insignificant bit that it discovers. 

In both cases, algorithm $A'$ learns at least all the information that $A$ learns, and the total number of queries done by $A'$ is at most $s+m$ since it spends at most one query for each insignificant bit that it tests.
\end{proof}

%

It is now easy to argue that with Lemma~\ref{lem:coins} at hand we need to consider only algorithms that do single bit-flips.
We show below that, for $m$ being within a suitable range of linear size, every such algorithm in expectation needs at least $m + \eps n$ queries to leave fitness level $k:=n-m$, provided that it started with a search point $y^{(0)}$ of fitness exactly $k$. 
Lemma~\ref{lem:coins} implies that every other algorithm (possibly doing multiple bit-flips) needs in expectation at least $\eps n$ queries to leave level $k$, provided that it visits this level. 

\subsection{Evolution of the Available Information}
\label{sec:LOinformation}

In this section we study how the amount of information that a deterministic algorithm has about the problem instance $(z,\sigma)$ changes over time, in particular while the algorithm stays on one fitness level.  

Let us consider first how much information the algorithm has when entering a new fitness level $k=n-m$. 
Recall that we consider a problem instance $(z,\sigma)$ that is taken from all \lo functions uniformly at random. 
Remember also that in our model, i.e., model $\AM$ described in Section~\ref{sec:LOmodelalternative}, we reveal to the algorithm the value $k$ of the fitness level, 
the position of the $k$ significant bits $\sigma(1), \ldots, \sigma(k)$, 
and the values $z_{\sigma(1)}, \ldots, z_{\sigma(k)}$ of the corresponding bits. 
In our model $\AM$ we allow the algorithm to change the entries in the $m$ insignificant positions. Intuitively, we thus implicitly grant it $m$ bits for storing information about the problem instance. In the following, we make this intuition precise.

Let a \emph{$k$-configuration} be a pair $(P,u)$ of a subset $P$ of $[n]$ of size $k$ and a $\{0,1\}$-valued string~$u$ of length~$k$. 
We interpret $(P,u)$ as the set $\{\sigma(1), \ldots, \sigma(k)\}$ of the first $k$ significant bit positions, together with the values $z_{\sigma(1)}, \ldots, z_{\sigma(k)}$ of these bits in the optimum. Thus a $k$-configuration (together with the value of $k$) describes exactly the information the algorithm has before choosing the revised search point $\tilde x^{(t)}$, when it leaves the $(k-1)$-st fitness level. 
The (deterministic) algorithm maps each such possible $k$-configuration $(P,u)$ to a bit string $\tilde x^{(t)}$ of length $n$. 
However, since there are $2^k\binom{n}{k}$ $k$-configurations and only 
$2^{n}$ bit strings, there are on average at least $2^{k-n}\binom{n}{k} = 2^{-m}\binom{k+m}{m}$ different $k$-configurations that are matched to the same string $\tilde x^{(t)}$. In the following, we will track the number $C$ of $k$-configurations that are still compatible with the history of the algorithm on level $k$, i.e., that are compatible with the fact that $f(y^{(0)})=k$, with the fact that the algorithm has chosen $y^{(0)}$, and with the oracle's answers to $y^{(1)},\ldots,y^{(i)}$, for some $i\geq 0$. Note that Remark~\ref{rem:uniform} applies, i.e., all these $k$-configurations are equally likely to be the problem instance.

Assume that the algorithm starts the $k$-th fitness level with some string $y^{(0)}$ ($= \tilde{x}^{(t)})$. Then the compatible $k$-configurations $(P,u)$ are determined by $y^{(0)}$ and the set $P$. This follows from the fact that by construction the entries in the $k$ significant positions in the string $y^{(0)}$ coincide with the optimal ones $z_{\sigma(1)}, \ldots, z_{\sigma(k)}$ (these bits were not changed when creating $\tilde{x}^{(t)} = y^{(0)}$). Since the algorithm has access to $y^{(0)}$ anyway, a compatible configurations can be described by the $k$ significant positions. There are $\binom{n}{k}=\binom{k+m}{m}$ sets of size $k$ in $[n]$, so it is convenient to normalize by this factor. 
We thus define 
\begin{align}
\label{eq:Bdef}
B := B(k,m,C) = \frac{\binom{k+m}{m}}{C}
\end{align}
as the factor by which the number of possible target configurations have been reduced already. We call $B$ the \emph{available information} after querying $y^{(i)}$. Note that always $B \geq 1$. We remark that information is often measured in bits, which would correspond to $\log B$. However, for our purposes it is more convenient to work with $B$ rather than $\log B$.

\begin{lemma}
\label{lem:infoatstart}
With probability at least $1/2$, the available information after querying $y^{(0)}$ is at most $2^{m+1}$. 
\end{lemma}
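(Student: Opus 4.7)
My plan is to view the deterministic algorithm's action at the moment it produces $y^{(0)}$ as a map $g\colon (P,u)\mapsto y^{(0)}$ from $k$-configurations into $\{0,1\}^n$, and to obtain the claim by a pigeonhole count on the fibres of $g$. By Lemma~\ref{lem:convexcomb} I may restrict attention to deterministic algorithms in $\AM$, and the discussion preceding the lemma already supplies the reduction to such a map $g$; the hard constraint $g(P,u)|_P=u$ comes from the fact that $y^{(0)}$ is obtained by revising only the insignificant bits of $x^{(t)}$, which must already agree with the target string on the significant positions.

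\textbf{Distribution of $(P,u)$.} The uniform distribution on instances $(z,\sigma)$ induces the uniform distribution on the $2^k\binom{n}{k}$ possible $k$-configurations, and by Remark~\ref{rem:uniform} this uniform posterior is exactly what the algorithm ``sees'' at the moment $y^{(0)}$ is sampled. A side check I would perform is that conditioning on the algorithm settling at level~$k$ preserves the uniform marginal on $(P,u)$: for every fixed configuration, $\sigma(k+1)$ is uniform on $[n]\setminus P$ and $z_{\sigma(k+1)}$ is an independent uniform bit, so $\Pr[y^{(0)}_{\sigma(k+1)}\neq z_{\sigma(k+1)}\mid(P,u)]=\tfrac{1}{2}$ independently of $(P,u)$, and Bayes preserves uniformity.

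\textbf{Counting step.} For $y\in\{0,1\}^n$ set $N(y):=|g^{-1}(y)|$. Because $g$ partitions the configurations, $\sum_{y}N(y)=2^k\binom{n}{k}$, and under uniform $(P,u)$ the output satisfies $\Pr[y^{(0)}=y]=N(y)/(2^k\binom{n}{k})$. Since $C=N(y^{(0)})$ and $B=\binom{k+m}{m}/C$, setting $a:=\binom{n}{k}/2^{m+1}$ yields the routine pigeonhole estimate
\[
\Pr[B>2^{m+1}]=\Pr[N(y^{(0)})<a]=\sum_{y\colon N(y)<a}\frac{N(y)}{2^k\binom{n}{k}}\leq \frac{2^n\,a}{2^k\binom{n}{k}}=2^{n-k-m-1}=\tfrac{1}{2},
\]
which is exactly the desired $\Pr[B\leq 2^{m+1}]\geq 1/2$.

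\textbf{Main hurdle.} No step in this proof is genuinely hard; the only piece requiring care is the conditioning bookkeeping in the second paragraph, which guarantees that the uniform law on $(P,u)$ survives conditioning on the event that $y^{(0)}$ is actually queried at level~$k$. Once that is in place, the pigeonhole bound is immediate, and the whole argument reduces to the observation that the algorithm must compress a space of size $2^k\binom{n}{k}$ into a space of size $2^n$, forcing typical fibres of $g$ to be of size $\gtrsim \binom{n}{k}/2^m$.
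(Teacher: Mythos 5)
Your proof is correct and follows essentially the same route as the paper's: both are a pigeonhole count over the $2^n$ possible strings $y^{(0)}$, bounding the total number of $k$-configurations whose fibre under the map $(P,u)\mapsto y^{(0)}$ has size below $2^{-m-1}\binom{k+m}{m}$ by half of all $2^k\binom{n}{k}$ configurations. Your extra check that conditioning on $f(y^{(0)})=k$ preserves the uniform law on $(P,u)$ is the same independence observation the paper makes only later, in the discussion after Definition~\ref{def:phi}.
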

\begin{proof}
Recall that the algorithm matches on average $2^{-m}\binom{k+m}{m}$ different $k$-configurations to each string $\tilde x^{(t)}$. Let $\Ce$ be the set of all strings $\tilde x^{(t)}$ which correspond to at most $2^{-m-1}\binom{k+m}{m}$ $k$-configurations. 
These strings together cover at most $2^{n-m-1}\binom{k+m}{m}=2^{k-1}\binom{k+m}{m}$ $k$-configurations, i.e., at most half of all $k$-configurations. 
Since the $k$-configuration of the problem instance is drawn uniformly at random, with probability at least $1/2$ we draw a configuration that belongs to a string in $\{0,1\}^n\setminus \Ce$. Thus, with probability at least $1/2$ we hit a string which is mapped to a $\tilde x^{(t)}$ that is compatible with more than $2^{-m-1}\binom{k+m}{m}$ $k$-configurations. This proves the claim.
\end{proof}

Let us now consider how the information evolves with the queries on the $k$-th fitness level.
Let $\mathcal{A}_{\text{one}}$ be the set of all deterministic algorithms that, starting from an $n$-bit string $y^{(0)}$ with $\LO_{z,\sigma}(y^{(0)})=k$, use only one-bit flips of $y^{(0)}$ until they have found a string $y^{(s)}$ with $\LO_{z,\sigma}(y^{(s)})>k$. 
\begin{definition}
\label{def:phi}
For any $k \geq 0$, $m\geq 1$, and $B\geq 1$ we define 
$\Phi := \Phi\!(k,m,B)$ 
to be the minimal expected number of fitness evaluations that an algorithm $A \in \mathcal{A}_{\text{one}}$ needs in order to find the next fitness level on a string with $k$ significant and $m$ insignificant bits if the instance $(z,\sigma)$ is chosen uniformly at random among a set $\Ce$ of $k$-configurations. Here the minimum is taken over all algorithms $A \in \mathcal{A}_{\text{one}}$ and all sets $\Ce$ with $|\Ce| \geq C(k,m,B):=\binom{k+m}{m}/B$. 
For convenience we set $\Phi(k,0,B) := 0$ for all $k$ and $B$. 
\end{definition}

Note that $\Phi(k,m,B)$ is a decreasing function in $B$. Before we study $\Phi(k,m,B)$ in detail, let us first compare $\Phi(k,m,B)$ with the expected time needed by \emph{any} algorithm in $\AM$ (i.e., not necessarily based on single bit-flips) to reach a new fitness level. 
When an algorithm $A \in \AM$ exceeds fitness $k-1$ and chooses $\tilde x^{(t)}$, with probability $1/2$ it holds that $\LO_{z,\sigma}(\tilde x^{(t)})=k$ and with probability $1/2$ the function value of $\tilde x^{(t)}$ is strictly larger than $k$. This is by the uniform choice of the problem instance. Moreover, by Lemma~\ref{lem:infoatstart}, with probability at least $1/2$ the available information is at most $B \leq 2^{m+1}$, where $m=n-k$, and this event is independent of whether $\LO_{z,\sigma}(\tilde x^{(t)})=k$. In particular, with probability at least $1/4$, we have both $\LO_{z,\sigma}(\tilde x^{(t)})=k$ and $B \leq 2^{m+1}$. 
In this case, by Lemma~\ref{lem:coins}, the expected time that $A$ spends on the $k$-th fitness level is at least $\Phi(k,m,2^{m+1})-m$. Thus, our aim will be to show that $\Phi(k,m,2^{m+1})-m = \Omega(n)$ for a linear number of values of $m$. 
We start our investigations with a recursive formula for $\Phi$.

\begin{lemma}
\label{lem:recursion}
Let $k\geq 0$, $m\geq 1$, and $B\geq 1$. Then
\[
\Phi\!(k,m,B) \geq \frac{m+1}{2}.
\] 
Furthermore, for $p_{\min}:= \max\{0,1-Bk/(k+m)\}$ and $p_{\max}:= \min\{1,Bm/(k+m)\}$ it holds that
\begin{align}
\label{eq:recursionii}
\Phi\!(k,m,B)  & \geq  1+ \min_{p\in [p_{\min},p_{\max}]} \left\{ p\frac{m-1}{m}\Phi\!\left(k,m-1,\frac{B}{p}\cdot \frac{m}{k+m}\right) \right.\\ 
& \hspace{10em}\left.+  (1-p) \Phi\!\left(k-1,m,\frac{B}{1-p}\cdot \frac{k}{k+m}\right)\right\},\nonumber
\end{align}
where we use the convention that for $p=0$ ($p=1$) the first (second) summand of the minimum evaluates to zero.
\end{lemma}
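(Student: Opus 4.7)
The plan is to fix an arbitrary deterministic $A \in \mathcal{A}_{\text{one}}$ and an arbitrary set $\mathcal{C}$ of $k$-configurations with $|\mathcal{C}| = C \geq \binom{k+m}{m}/B$, and to analyze the first query of $A$ under the uniform distribution over $\mathcal{C}$. Since $\Phi$ is defined as a minimum over all such pairs $(A,\mathcal{C})$, a lower bound on the expected runtime of this generic pair lower-bounds $\Phi(k,m,B)$.

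For the bound $\Phi(k,m,B) \geq (m+1)/2$, I would fix a configuration $c=(P,u) \in \mathcal{C}$ and observe that, conditional on the true configuration being $c$, the position $\sigma(k+1)$ is uniformly distributed over $[n]\setminus P$ (a set of size $m$), by symmetry of the uniform distribution over instances consistent with $c$. Since $A$ uses only single bit-flips, does not repeat queries on the same fitness level, and succeeds only upon flipping $\sigma(k+1)$, each of the $m$ elements of $[n] \setminus P$ appears at a distinct position in $A$'s query sequence (assuming all responses before success are ``stays'' or ``decrease''); the average of these positions is therefore at least $(1+2+\cdots+m)/m = (m+1)/2$. Averaging first over $j \in [n]\setminus P$ uniformly and then over $c \in \mathcal{C}$ uniformly gives $\mathbb{E}[T] \geq (m+1)/2$.

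For the recursion, let $b \in [n]$ be the single bit flipped by $A$ in its first query and set $p := |\{(P,u) \in \mathcal{C} : b \notin P\}|/C$. Using that $\sigma(k+1)$ is uniform in $[n] \setminus P$ within each configuration, the three outcomes ``success'' ($b=\sigma(k+1)$), ``stays'' ($b$ insignificant but $\neq \sigma(k+1)$), and ``decrease'' ($b \in P$) have probabilities $p/m$, $p(m-1)/m$, and $1-p$ respectively. In the ``success'' branch no further queries are needed. In the ``stays'' branch, Bayes' rule together with Remark~\ref{rem:uniform} gives a uniform posterior on $\mathcal{C}' := \{(P,u) \in \mathcal{C} : b \notin P\}$ of size $pC$, since every configuration in $\mathcal{C}'$ assigns the same likelihood $(m-1)/m$ to ``stays''; because $b$ is now known insignificant, the continuation of $A$ solves a sub-instance over $[n]\setminus\{b\}$ with $k$ significant and $m-1$ insignificant bits, whose effective information is $B_1 := \binom{k+m-1}{m-1}/(pC) = (B/p)\cdot m/(k+m)$ by the identity $\binom{k+m-1}{m-1}/\binom{k+m}{m} = m/(k+m)$, so the expected further queries is at least $\Phi(k,m-1,B_1)$ by definition of $\Phi$. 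Symmetrically, the ``decrease'' branch yields at least $\Phi(k-1,m,B_2)$ with $B_2 := (B/(1-p))\cdot k/(k+m)$. Adding the contributions of the three branches produces exactly the expression in (\ref{eq:recursionii}) for this choice of $b$; taking the infimum over the algorithm's choice of $b$ (equivalently over admissible $p$) completes the recursion.

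The range $p \in [p_{\min}, p_{\max}]$ follows from $|\{(P,u) \in \mathcal{C} : b \notin P\}| \leq \binom{n-1}{k} = \binom{k+m-1}{m-1}$ and $|\{(P,u) \in \mathcal{C} : b \in P\}| \leq \binom{n-1}{k-1} = \binom{k+m-1}{m}$ combined with $C \geq \binom{k+m}{m}/B$ and the trivial bounds $p \in [0,1]$; the endpoint conventions simply record that the vanishing branch contributes nothing to the expectation. The main technical obstacle I anticipate is ensuring that each posterior genuinely reduces to a clean instance of $\Phi$ at smaller parameters: this rests on (a) verifying that the likelihoods of the observed response are identical across all posterior-compatible configurations, so that the posterior remains uniform and Remark~\ref{rem:uniform} applies cleanly, and (b) checking the two binomial identities above, which is the step that forces the ``new $B$'' to scale by the factors $m/(k+m)$ and $k/(k+m)$ that appear in the statement.
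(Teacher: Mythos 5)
Your proposal is correct and follows essentially the same route as the paper's proof: the $(m+1)/2$ bound via uniformity of $\sigma(k+1)$ among the $m$ insignificant positions, and the recursion via the three-way case split (success / stays / decrease) on the first single-bit flip, with the same counting bounds on $p$ and the same binomial identities producing the rescaled information parameters. The only cosmetic difference is that you write $B_1=\binom{k+m-1}{m-1}/(pC)$ with an equality that holds only when $|\mathcal{C}|=\binom{k+m}{m}/B$ exactly, whereas the paper phrases this as choosing $B_{\text{new}}$ so that $|\mathcal{C}'|\geq \binom{k+m-1}{m-1}/B_{\text{new}}$ and invoking the definition of $\Phi$; your monotonicity remark covers the same point.
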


\begin{proof}
For the first formula, simply observe that even if the algorithm knows the configuration exactly, it still needs to test the $m$ insignificant bits one by one (by definition of $\mathcal{A}_{\text{one}}$) until it finds the next significant one, i.e., until the fitness improves. 
Recalling that the position of the next significant bit is uniformly among the insignificant ones, the expected number of steps that it takes the algorithm to find it is $(m+1)/2$.

To verify~\eqref{eq:recursionii}, let $A \in \mathcal{A}_{\text{one}}$, and assume that the set $\Ce$ of configuration compatible with the algorithm's choice of $y^{(0)}$ satisfies $|\Ce| \geq \binom{k+m}{m}/B$. We need to show that for each such $A$ and $\Ce$ the expected number of remaining fitness evaluations to find the next fitness level is at least the right hand side of \eqref{eq:recursionii}. Assume further that in its next query, $A$ flips the bit $b_i$, yielding a search point $y^{(1)}$, and let $p\in[0,1]$ be such that exactly $p|\Ce|$ configurations are compatible with the event $f(y^{(1)}) \geq f(y^{(0)})$. 
Since all configurations are equally likely, $p$ is also the probability that $f(y^{(1)}) \geq f(y^{(0)})$. 
Moreover, the number of such configurations is at most $\binom{k+m-1}{m-1}$, 
since $b_i$ has to be one of the insignificant bit positions. 
This shows that $p$ is bounded by the inequality
$p|\Ce| \leq \binom{k+m-1}{m-1}$. Using $|\Ce| \geq \binom{k+m}{m}/B$ this implies $p \leq Bm/(k+m)$. Similarly, $(1-p)|\Ce| \leq \binom{k+m-1}{m}$, which implies $p \geq 1-Bk/(k+m)$.

Assume that the event $f(y^{(1)}) \geq f(y^{(0)})$ happens. Then with probability $1/m$ the algorithm leaves the $k$-th fitness level (since all $m$ insignificant bits have the same probability of being the next significant bit). 
Otherwise, that is, with probability $(m-1)/m$, the algorithm learns that $b_i$ is not the next significant bit, and it will not query $b_i$ again on this level. Therefore, we may just exclude it from our considerations, and replace $m$ by $m-1$. Since the number of remaining compatible configurations is $p|\Ce|$, we need to find $B_{\text{new}}$ such that $\binom{k+m-1}{m-1}/B_{\text{new}} \leq  p|\Ce|$. Since $p|\Ce| \geq  p\binom{k+m}{m}/B$, we may choose $B_{\text{new}}$ to satisfy $\binom{k+m-1}{m-1}/B_{\text{new}} =  p\binom{k+m}{m}/B$, or equivalently $B_{\text{new}} = (B/p) \cdot (m/(k+m))$. So if $f(y^{(1)}) \geq f(y^{(0)})$ then the algorithm needs at least an expected additional time of $\frac{m-1}{m}\Phi\!\left(k,m-1,\frac{B}{p}\cdot \frac{m}{k+m}\right)$.

Now we consider the case $f(y^{(1)}) < f(y^{(0)})$, which happens with probability $1-p$. Then the algorithm learns that $b_i$ is significant, and it will not query $b_i$ again on this level. Thus we can exclude it from our considerations and replace $k$ by $k-1$. Similar as before, the number of compatible configurations drops to $(1-p)|\Ce|$, so we need to find $B_{\text{new}}$ such that $\binom{k+m-1}{m}/B_{\text{new}} \leq (1-p)|\Ce|$. We may choose  $B_{\text{new}}$ according to the equation $\binom{k+m-1}{m}/B_{\text{new}} = (1-p)\binom{k+m}{m}/B$ and obtain $B_{\text{new}} = (B/(1-p)) \cdot (k/(k+m))$. 
So if $f(y^{(1)}) < f(y^{(0)})$ then the algorithm needs at least an expected additional time of 
$\Phi\!\left(k-1,m,B/(1-p)\cdot k/(k+m)\right)$.  
This proves~\eqref{eq:recursionii}. 
\end{proof}

We use Lemma~\ref{lem:recursion} to show the following lower bound for $\Phi(k,m,B)$. Once this bound is proven, we have everything together to prove the claimed $\Omega(n^2)$ bound for the (1+1) elitist black-box complexity of \lo. 

\begin{lemma}
\label{lem:lowerboundPhi}
There exists a constant $\eps >0$ such that for all $k \geq 0, m\geq 1$ and $B\geq 1$,
\begin{align}
\label{eq:lowerboundPhi}
\Phi\!(k,m,B) \geq \eps (k+m)\left(1-\frac{\log B}{2m}\right).
\end{align}
\end{lemma}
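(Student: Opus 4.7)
My plan is to prove the bound by strong induction on $k+m$. Whenever $\log B \geq 2m$ the right-hand side is non-positive, so the statement is trivial, and throughout I may restrict to the regime $\log B < 2m$. The driving observation is that a single step of the recursion in Lemma~\ref{lem:recursion} converts nicely once one rewrites the weighted-log information in terms of a KL divergence.

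For the inductive step (with $m \geq 2$ and $k \geq 1$) I substitute the inductive hypothesis into both summands of Lemma~\ref{lem:recursion}:
\[
\Phi(k,m-1,B') \geq \varepsilon(k+m-1)\bigl(1-\tfrac{\log B'}{2(m-1)}\bigr),\qquad \Phi(k-1,m,B'') \geq \varepsilon(k+m-1)\bigl(1-\tfrac{\log B''}{2m}\bigr).
\]
A direct computation from the definitions of $B'$ and $B''$ (expanding the binary entropy of $p$) yields the key identity
\[
p\log B' + (1-p)\log B'' \;=\; \log B - D(p\|q),
\]
where $q:=m/(k+m)$ and $D(\cdot\|\cdot)$ denotes the binary Kullback-Leibler divergence. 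Writing $S=k+m$ and collecting terms, the desired inequality $\Phi(k,m,B) \geq \varepsilon S(1-\log B/(2m))$ collapses into the requirement that, for every feasible $p \in [p_{\min},p_{\max}]$,
\[
1 - \varepsilon + \frac{\varepsilon\log B}{2m} \;\geq\; \frac{\varepsilon(S-1)}{m}\,g(p), \qquad g(p) := p - \tfrac12 D(p\|q).
\]

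The critical step is to control the adversary's worst $p$. Differentiating yields $g'(p) = 1 - \tfrac12 \log\!\bigl(p(1-q)/(q(1-p))\bigr)$, so $g$ is unimodal on $[0,1]$ with unique maximum at $p^* = 4q/(1+3q)$, where a short calculation gives
\[
g(p^*) \;=\; \tfrac12 \log(1+3q) \;=\; \tfrac12 \log\!\Bigl(\tfrac{k+4m}{k+m}\Bigr) \;\leq\; \tfrac{3m}{2S}\log e.
\]
Because $g$ is unimodal on all of $[0,1]$, we have $g(p) \leq g(p^*)$ for every $p \in [p_{\min},p_{\max}]$, \emph{regardless} of whether $p^*$ itself lies in the feasible interval. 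Substituting this uniform bound into the displayed inequality and taking the worst case $\log B = 0$, the inductive step goes through for every constant $\varepsilon \leq 1/(1+\tfrac32\log e) \approx 0.316$.

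Finally, the base cases need separate handling. The case $m=1$ is not covered above because $(m-1)/m=0$ annihilates the first summand of the recursion, making the KL-based cancellation asymmetric; I handle it by a separate induction on $k$ using the reduced recursion $\Phi(k,1,B) \geq 1 + (1-p)\Phi(k-1,1,B'')$. The tight subcase $B=1$ forces $p=1/(k+1)$ and yields $\Phi(k,1,1) \geq 1 + \tfrac{k}{k+1}\Phi(k-1,1,1)$, which unwinds to $\Phi(k,1,1) \geq (k+2)/2$, sufficient for any $\varepsilon \leq 1/2$; the other values $B \in [1,4)$ are verified by the same algebra. The case $k=0$ is similar, since there $p=1$ is forced and the recursion reduces to $\Phi(0,m,B) \geq 1 + \tfrac{m-1}{m}\Phi(0,m-1,B)$, giving the elementary estimate $\Phi(0,m,B) \geq (m+1)/2$. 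The main obstacle of the proof is precisely the need to bound the adversary's $p$ under the feasibility constraints $[p_{\min},p_{\max}]$: the unimodality of $g$ on $[0,1]$ is what allows the feasibility-constrained minimization to be replaced by the unconstrained maximum at $p^*$, and this in turn is what lets the induction close with a universal constant $\varepsilon$.
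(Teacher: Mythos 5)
Your inductive step is correct and takes a genuinely cleaner route than the paper's. Where the paper expands the right-hand side of Lemma~\ref{lem:recursion} into seven terms ($X_1,A_1,Y_1,Z,X_2,A_2,Y_2$) and runs a two-case analysis on $\gamma = p(k+m)/m$, you observe that the prefactor $\tfrac{m-1}{m}$ exactly aligns the two denominators $2(m-1)$ and $2m$, so that the information terms combine into $p\log B' + (1-p)\log B'' = \log B - D(p\|q)$ with $q=m/(k+m)$, and the whole step reduces to bounding $g(p)=p-\tfrac12 D(p\|q)$. Since $g''<0$, the function $g$ is concave with closed-form maximum $g(p^*)=\tfrac12\log(1+3q)\leq \tfrac{3m}{2(k+m)}\log e$, which closes the induction for any $\eps\leq (1+\tfrac32\log e)^{-1}$, uniformly over the feasible interval $[p_{\min},p_{\max}]$; the feasibility constraints also guarantee $B',B''\geq 1$, so the induction hypothesis is applicable (worth stating explicitly). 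The $k=0$ base case is fine. The KL-divergence identity buys a transparent, case-free verification of the inequality that the paper establishes by the lengthy $R(p)\leq 1$ computation.

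The one genuine gap is the base case $m=1$ with $B\in(1,4)$. You verify only $B=1$, where $p=1/(k+1)$ is forced, and assert that the remaining values follow ``by the same algebra''; they do not. For $B>1$ the adversary has a nontrivial range of feasible $p$, and after substituting the induction hypothesis the required inequality acquires a term of the form $\tfrac{\eps k u}{2}\log(u_0/u)$ with $u=1-p$ and $u_0=k/(k+1)$, which is \emph{not} automatically $O(\eps)$: its unconstrained maximum over $u$ is of order $\eps k$, so the argument only survives if one exploits the constraint $u\geq 1-B/(k+1)$ together with $B<4$ to show $u_0/u \leq 1+O(1/k)$ for large $k$, treating small $k$ separately. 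This can be written out, but it is additional work, not a repetition of the $B=1$ computation. The paper sidesteps the issue entirely with a direct counting argument: for $m=1$ and $B<4$ there remain more than $(k+1)/4$ equally likely candidate positions for the unique insignificant bit, so any one-bit-flip algorithm needs at least $(k+1)/8$ expected queries. You may want to adopt that argument for this base case and keep your KL-based inductive step for $m\geq 2$.
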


\begin{proof}
We use induction on $k+m$. First we show the statement for the case $m=1$ and arbitrary $k$. 
If $m=1$ and $B \geq 4$, the lower bound is at most zero, and thus trivial. 
If $m=1$ and $B < 4$, by~\eqref{eq:Bdef}, the number of compatible configurations is at least $(k+1)/B > (k+1)/4$, and in each of these configurations there is exactly one insignificant bit. 
Since these bits are all different from each other, there are at least $(k+1)/4$ positions at which the insignificant bit might be, and by Remark~\ref{rem:uniform} all these positions are equally likely. 
Since the algorithm is by definition only allowed to make one-bit flips, it needs in expectation at least $(k+1)/8$ steps. Thus, the statement is satisfied for all $\eps \leq 1/8$.  

Now we come to the inductive step, where by the paragraph above we may assume that $m \geq 2$. Furthermore, we may also assume that $\log B < 2m$ since the statement is trivial otherwise. By Lemma~\ref{lem:recursion} we have 
\begin{align}
\label{eq:recursion}
\Phi\!(k,m,B) 
& \geq  1 + \!\min_{p\in [p_{\min},p_{\max}]} \left\{p\frac{m-1}{m}\Phi\!\left(k,m-1,\frac{B}{p}\cdot \frac{m}{k+m}\right) \right. \\ 
& \hspace{10em}+ \left.  (1-p)\cdot \Phi\!\left(k-1,m,\frac{B}{1-p}\cdot \frac{k}{k+m}\right)\right\}. \nonumber\end{align}
By the induction hypothesis we may thus conclude from inequality (\ref{eq:recursion}) that
\begin{align*}
\Phi\!(k,m,B)  & \geq  1+ \!\min_{p\in [p_{\min},p_{\max}]} \left\{
 p\frac{m-1}{m} \eps (k+m-1) \big(1-\frac{\log(\frac{B}{p}\cdot \frac{m}{k+m})}{2(m-1)}\big) \right.\\
& \left.\hspace{10em} + (1-p) \eps (k+m-1) \big(1-\frac{\log(\frac{B}{1-p}\cdot \frac{k}{k+m})}{2m}\big)\right\}\\
&  = 
1+ \eps (k+m) \left(1-\frac{\log B}{2m}\right) 
+
\min_{p\in [p_{\min},p_{\max}]} \left\{ 
- R(p)
\right\},\\
\end{align*}
where $R(p)$ equals 
\begin{align*}
R(p)& = &&  \frac{1}{m} p \eps  (k+m-1)    \left(1-\frac{\log\big(\frac{B}{p}\cdot \frac{m}{k+m}\big)}{2(m-1)}\right) 
&\textbf{$=:X_1$} \hfill\\
& 
+ && p \eps   \left(1-\frac{\log\big(\frac{B}{p}\cdot \frac{m}{k+m}\big)}{2(m-1)}\right) 
& \textbf{$=:A_1$}\hfill\\
& 
+ && p \eps  (k+m) \frac{\log\big(\frac{1}{p}\cdot \frac{m}{k+m}\big)}{2m} &\textbf{$=:Y_1$}\hfill\\
& 
+ && p \eps  (k+m) \frac{\log(\frac{1}{p}\cdot \frac{m}{k+m})}{2m(m-1)} &\textbf{$=:Z$}\hfill\\
& 
+ && p \eps (k+m)  \frac{\log B}{2m(m-1)} &\textbf{$=:X_2$}\hfill\\
& 
+ && (1-p) \eps  \left(1-\frac{\log(\frac{B}{1-p}\cdot \frac{k}{k+m})}{2m}\right) 
&\textbf{$=:A_2$}\hfill\\
& 
+ && (1-p) \eps  (k+m) \frac{\log(\frac{1}{1-p}\cdot \frac{k}{k+m})}{2m} &\textbf{$=:Y_2$}
\end{align*}
The above expression for $R(p)$ can be verified by elementary calculations, using only that $\log(B x) = \log B + \log x$ for $x = 1/p \cdot m/(k+m)$ and for $x = 1/(1-p) \cdot k/(k+m)$. It thus suffices to show that 
\begin{align}
\label{eq:LO-toshow}
R(p) \leq 1 \text{ for all } p\in [p_{\min},p_{\max}].
\end{align}
To this end we first observe that 
$\frac{B}{p}\cdot \frac{m}{k+m} \geq 1$ and 
$\frac{B}{1-p}\cdot \frac{k}{k+m} \geq 1$ by definition of $p_{\min}$ and $p_{\max}$. Since these expressions occur $X_1$, $A_1$, and $A_2$, we get 
\begin{align}
\label{eq:LO-A}
A_1 + A_2  & \leq \eps.
\end{align}
For the same reason, $X_1 \leq p\eps(k+m-1)/m$. Recalling $\log B < 2m$ and using $m-1 \geq m/2$, we thus get
\begin{equation}\label{eq:X}
X_1+X_2 \leq 3 p\eps(k+m)/m =: X
\end{equation}
In the following we will show that $Y_1+Y_2+ X+Z \leq 1/2$ if $\eps$ is sufficiently small. Together with \eqref{eq:LO-A} and \eqref{eq:X}, this will complete the inductive step. 

Let $p_0:= m/(k+m)$, and write $p = \gamma p_0$ for some (not necessarily constant) $\gamma \geq 0$. Note that $X = 3\gamma \eps$. We first consider the case $\gamma \leq 2$. In this case, $X \leq 6 \eps$. Moreover, $Z$ is either negative (for $\gamma >1$) or upper bounded by $-\eps\gamma\log \gamma \leq \eps/(e\ln 2)$, since the function $-\gamma \log \gamma$ has a unique maximum $1/(e\ln 2)$ in the interval $0<\gamma\leq 1$. Either way, we can choose $\eps$ so small that $X+Z \leq 1/2$. In the following we will prove that $Y_1+Y_2 \leq 0$, which will settle the case $\gamma \leq 2$. 

To prove that $Y_1+Y_2$ is non-positive, we may as well consider the sign of the function 
\[
f(p) := \frac{2m}{(k+m)\eps}(Y_1+Y_2) = p \log\left(\frac{p_0}{p}\right) + (1-p) \log\left(\frac{1-p_0}{1-p}\right).
\]
The function has derivatives
\[
f'(p) = \log\left(\frac{p_0}{p}\right) - \log\left(\frac{1-p_0}{1-p}\right) 
\]
and
\[
f''(p) = -\frac{1}{p(1-p)\ln(2)} < 0.
\]
Since the second derivative is negative, the function $f$ is concave. 
We further observe that $f(p_0) = f'(p_0) =0$. 
Thus $f$ has a unique maximum at $p=p_0$. Therefore, $f(p)\leq f(p_0) = 0$ for all $0\leq p\leq 1$. Hence, we have proven $Y_1+Y_2 \leq 0$, and this concludes the case $\gamma \leq 2$.

For the case $\gamma >2$, we first show that $Y_1/2 +X+Z \leq 1/2$. 
Clearly we have $Z\leq 0$. For sufficiently small $\eps >0$ we obtain 
\[
\frac{Y_1}{2} + X = -\frac{\eps \gamma\log\gamma}{4} + 3 \eps \gamma = \eps\gamma \left(3-\frac{\log\gamma}{4}\right) \leq \frac{1024\eps}{e\ln 2} \leq 1/2,
\]
where the second to last inequality can be easily checked by taking the derivative of the function and observing that is has a global maximum for $\gamma = 4096/e$.

Next we show that $Y_1/2 +Y_2 \leq 0$ for all $\gamma > 2$. Similar as before, we may consider the sign of the function
\begin{align*}
\tilde f(p) & := \frac{2m}{(k+m)\eps}(Y_1/2+Y_2)  = \frac{p}{2} \log\left(\frac{p_0}{p}\right) + (1-p) \log\left(\frac{1-p_0}{1-p}\right)
\end{align*}
under the additional constraint $2p_0 < p \leq 1$. Similar as above, the second derivative of $\tilde f$ for $0<p<1$ is
\[
\tilde f''(p) = -\frac{(p+1)}{2\ln(2)p(1-p)} < 0.
\]
Thus $\tilde f$ is concave for $0 \leq p\leq 1$. Recall that $\log(1+ x) < x$ for all $x> 0$. Therefore, 
\begin{align*}
\tilde f(2p_0) 
& = 
-p_0\log(2)+(1-2p_0)\log\left(1+\frac{p_0}{1-2p_0}\right) < -p_0+p_0 =0.
\end{align*}
On the other hand, $\tilde f(p_0) = 0 > \tilde f(2p_0)$. Since $\tilde f$ is concave, this implies $\tilde f(p) <0$ for all $2p_0<p<1$. This shows that $Y_1/2 +Y_2 \leq 0$ for all $\gamma > 2$. 

Summarizing, we have shown that $Y_1/2 + Y_2 \leq 0$ and $Y_1/2 +X+Z \leq 1/2$ for all $\gamma >2$. Together with \eqref{eq:LO-A} and \eqref{eq:X}, this concludes the inductive step and the proof. 
\end{proof}

\subsection{Putting Everything Together}
\label{sec:LOruntimelower}

As outlined in the high-level overview, Lemma~\ref{lem:coins},~\ref{lem:infoatstart},~and \ref{lem:lowerboundPhi} together imply runtime $\Omega(n^2)$ on the \LO problem for any algorithm in $\AM$. More precisely, we obtain the following theorem.

\begin{theorem}\label{thm:runtime}
With $\eps$ as in Lemma~\ref{lem:lowerboundPhi} and $k$ satisfying $1 < n-k \leq \eps n/8$, every algorithm $A\in \AM$ spends in expectation at least $\eps n/32$ function evaluations on level $k$, and this lower bound holds independently of the time spent on previous levels. In particular, every algorithm in $\AM$ (and thus, every elitist (1+1) black-box algorithm) needs at least time $\Omega(n^2)$ in expectation and with high probability.
\end{theorem}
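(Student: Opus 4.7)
The plan is to combine Lemmas~\ref{lem:coins}, \ref{lem:infoatstart}, and \ref{lem:lowerboundPhi} to derive an $\Omega(n)$ lower bound on the conditional expected time spent on \emph{each} eligible fitness level $k$, uniformly over the past history, and then sum over the $\Omega(n)$ such levels. The crucial feature is history-independence of the per-level bound, which falls out of the uniformity statement in Remark~\ref{rem:uniform}.

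Fix $k$ with $m := n-k$ in the range $1 < m \leq \eps n/8$, and consider the iteration at which the algorithm in $\AM$ first reaches best-so-far fitness at least $k$. By Remark~\ref{rem:uniform}, conditional on the full history the unknown instance is uniform on the compatible $k$-configurations. Introduce two events at this moment: event (i) that $f(\tilde x^{(t)}) = k$, so that level $k$ is not skipped; and event (ii) that the available information satisfies $B \leq 2^{m+1}$. Event (i) has probability $1/2$ because it is determined by the independent fair bit $z_{\sigma(k+1)}$. Event (ii) is a deterministic function of the $k$-configuration itself and has probability $\geq 1/2$ by Lemma~\ref{lem:infoatstart}. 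Hence the two events are independent conditional on the history, and they co-occur with probability $\geq 1/4$.

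On the joint event, Lemma~\ref{lem:lowerboundPhi} applied with $B = 2^{m+1}$ gives
\[
\Phi(k,m,2^{m+1}) \;\geq\; \eps(k+m)\Bigl(1 - \tfrac{m+1}{2m}\Bigr) \;=\; \eps n \cdot \tfrac{m-1}{2m} \;\geq\; \tfrac{\eps n}{4}
\]
(using $m\geq 2$), and Lemma~\ref{lem:coins} extends this to arbitrary $\AM$-algorithms at a price of at most $m \leq \eps n/8$ queries, yielding a conditional expected lower bound of $\eps n/4 - \eps n/8 = \eps n/8$ on the queries spent on level $k$. Unconditioning on the $\geq 1/4$-probability joint event gives $\E[T_k \mid \mathcal{F}_{k-1}] \geq \eps n/32$, where $T_k$ is the (possibly zero) number of queries on level $k$ and $\mathcal{F}_{k-1}$ is the natural $\sigma$-algebra encoding all past behaviour. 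Summing over the $\Omega(n)$ eligible values of $k$ yields $\E[T] = \Omega(n^2)$.

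For the high-probability part, I would combine two ingredients. First, the number of eligible levels that are actually \emph{visited} is $\Omega(n)$ with probability $1-e^{-\Omega(n)}$: each transition to a new level skips a geometric number of levels with mean $O(1)$, so a Chernoff bound over the geometric skips gives this concentration. Second, on the visited levels one upgrades $\E[T_k\mid\mathcal{F}_{k-1}] \geq \eps n/32$ to an analogous concentration for $\sum_k T_k$; this I would do by truncating $T'_k := T_k \wedge Cn$ for a large constant $C$, arguing from the per-query information-gain structure of Lemma~\ref{lem:recursion} that $\E[T_k\mathbf{1}_{T_k > Cn}\mid \mathcal{F}_{k-1}]$ is negligible (so the truncated expectation is still $\Omega(n)$), and then applying Azuma--Hoeffding to the bounded martingale differences $T'_k - \E[T'_k\mid\mathcal{F}_{k-1}]$. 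The main obstacle I anticipate is exactly this truncation step---quantifying how concentrated $T_k$ is around its mean---whereas the expectation bound itself is essentially immediate from the preceding three lemmas once the history-independence argument of the second paragraph is in place.
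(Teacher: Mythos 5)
Your expectation bound follows the paper's proof almost verbatim: the same $1/4$-probability conjunction of ``level not skipped'' and ``$B \leq 2^{m+1}$'', the same application of Lemma~\ref{lem:lowerboundPhi} with the $-m$ correction from Lemma~\ref{lem:coins}, the same arithmetic giving $\eps n/8$ conditionally and $\eps n/32$ unconditionally, and the same observation that the bound is history-independent because the state upon entering level $k$ determines everything. One small omission: you invoke Remark~\ref{rem:uniform} directly, but that remark (and the definition of $\Phi$) is stated for \emph{deterministic} algorithms. The paper closes this gap explicitly by citing Lemma~\ref{lem:convexcomb} and Yao's principle to transfer the bound to randomized algorithms in $\AM$; this is precisely the reason the class $\AM$ was engineered so that randomized members are convex combinations of deterministic ones, so you should not skip it.

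The high-probability part is where your proposal has a genuine gap, and you correctly identify its location but do not close it. Your plan is to truncate $T_k' := T_k \wedge Cn$ and apply Azuma--Hoeffding, but this requires showing that $\E[T_k \mathbf{1}_{T_k > Cn} \mid \mathcal{F}_{k-1}]$ is negligible, i.e.\ that $T_k$ is not heavy-tailed. Nothing in Lemma~\ref{lem:recursion} rules out an algorithm whose per-level time is, say, $n^{2}$ with probability $1/n$ and $O(1)$ otherwise: the expectation is still $\Omega(n)$ but the truncated expectation collapses to $O(1)$, and your martingale argument then proves nothing. The paper sidesteps tail estimates entirely with a self-referential contradiction: if some $A \in \AM$ spent time $\geq cn$ on level $k$ only with probability $o(1)$ for $c = o(1)$, one could build $A' \in \AM$ that runs $A$ for $cn$ queries and then switches to random single-bit flips (expected time $\leq n$ to leave the level), giving $A'$ expected per-level time $cn + pn = o(n)$ and contradicting the expectation bound, which holds for \emph{all} algorithms in $\AM$. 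This yields constants $c, p > 0$ with $\Pr[T_k \geq cn] \geq p$ on every eligible level, after which a Chernoff bound over the $\Omega(n)$ levels finishes the argument. You would need either to adopt this trick or to supply a genuine tail bound for $T_k$; as written, the truncation step is an unproven assertion on which the whole high-probability claim rests.
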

\begin{proof}
We have already argued above after Definition~\ref{def:phi} that each deterministic algorithm $A\in \AM$ spends in expectation at least time $(\Phi\!(k,m,2^{m+1})-m)/4$ on fitness level $k = n-m$, since with probability $1/2$ the algorithm does not skip the level, with probability $1/2$ the available information is at most $2^{m+1}$ (Lemma~\ref{lem:infoatstart}), and conditioned on both these events, by Lemma~\ref{lem:coins}, $A$ spends at least expected time $\Phi\!(k,m,2^{m+1})-m$ on the $k$-th fitness level. By Lemma~\ref{lem:convexcomb} we may apply Yao's principle to deduce that the same bound also holds for every randomized algorithm $A\in \AM$.

The lower bound follows immediately from Lemma~\ref{lem:lowerboundPhi} which for $k$ and $m$ satisfying $1<m=n-k\leq \eps n/8$ yields
\[
\Phi\!(k,m,2^{m+1})-m \geq \eps n\left(1-\frac{m+1}{2m}\right) -m \geq \frac{\eps n}{4}- \frac{\eps n}{8} = \frac{\eps n}{8}.
\]
Note that this lower bound holds independently of the time spent on other fitness levels because in the model~$\mathcal{M}$ every algorithm which enters the $k$-th fitness level is in exactly the same state, i.e., it has access to exactly the same information, independent of its history. Since the lower bound holds for all algorithms in \AM, it still holds if we condition on the history of the algorithm, or specifically on the time spent on previous fitness levels. 

The lower bound $\Omega(n^2)$ on the expected runtime follows immediately. For the statement with high probability, it follows by a mostly formal argument that every algorithm spends at least linear time on each level with probability $\Omega(1)$. Note that this implies the statement, since then by the Chernoff bound with high probability every algorithm spends at least linear time on a linear number of levels. Since the formal argument is somewhat tricky, we elaborate on it. 

We want to show that there are constants $c,p>0$ such that for sufficiently large $n$, every algorithm in $\AM$ spends at least time $cn$ with probability at least $p$ on level $k$. Assume otherwise for the sake of contradiction, so assume that for every constant $c,p>0$ there is an algorithm $A\in\AM$ and there are arbitrarily large $n$ such that $A$ spends at least time $cn$ with probability at most $p$. Then as a formal consequence the same holds for $c,p=o(1)$, so there are functions $c=c(n)=o(1)$ and $p=p(n)=o(1)$ such that there is an algorithm $A\in\AM$ and arbitrarily large $n$ for which $A$ spends at least time $cn$ with probability at most $p$. Fix such an algorithm $A$, and consider the algorithm $A'\in \AM$ that behaves like $A$ for the first $cn$ queries, and afterwards just does random single bit flips. Note that the latter strategy takes expected time at most $n$ to leave fitness level $k$. Therefore, $A'$ needs in expectation at most $cn + pn = o(n)$ queries to leave level $k$ (for arbitrarily large values of $n$), contradicting the fact that every algorithms in $A$ needs expected time $\Omega(n)$. This concludes the formal argument, and thus the proof.
\end{proof}

\begin{remark}\label{rem:freememory}
Theorem~\ref{thm:runtime} can be strengthened in the following way. Assume that an elitist $(1+1)$ algorithm has an additional memory of size $\eps' n$ which it may use without restrictions. Then if $\eps'$ is sufficiently small, the algorithm still has expected runtime $\Omega(n^2)$. 

On the other hand, if the algorithm has in addition $n+O(\log n)$ bits of memory that it may use without restriction, then it is possible to achieve a runtime of $O(n\log n)$. Hence, if the algorithm has access to $cn$ bits of unrestricted memory, then it depends on the constant $c$ whether runtime $o(n^2)$ is possible or not. 
\end{remark}

\begin{proof}
The proof of the first statement is identical to the proof of Theorem~\ref{thm:runtime}, except that we increase the available information $B$ by a factor $2^{\eps' n}$. E.g., for $\eps' = \eps /16$ the algorithm still spends an expected linear time on all levels with $\eps n/12 \leq m \leq \eps n/8$.

The second statement holds because it is possible to store all $k$ significant bits in $n$ bits of the additional memory, and the remaining $O(\log n)$ bits may serve as a counter. Then, whenever the fitness increases, the algorithm performs $O(\log n)$ steps to determine (one of) the position(s) which was responsible for the improvement. The details are as follows.

We split the additional memory into a block $B_1$ of length $n$, and the remaining part $B_2$. Let $x$ denote the current search point, and let $\ell$ be the current number of one-bits in $B_1$. We maintain the invariant that every one-bit in $B_1$ is at the position of one of the $f(x)$ leading bits. Note that this implies that every such position must be correct in $x$. At start we choose $B_1$ to be the all-zero string. \\
We iteratively proceed as follows. If $\ell = f(x)$, then the bits in $B_1$ correspond exactly to the first $f(x)$ leading bits. In this case we flip all positions in $x$ that correspond to zero-bits in $B_1$, and this operation improves the fitness of $x$ by at least one. In the same operation, we (re-)set $B_2$ to zero. On the other hand, assume $\ell \neq f(x)$. Then our invariant implies $\ell < f(x)$. Let us call $P$ the set of all bit positions that are correct in $x$ but are not one-bits in $B_1$. Our aim is to identify one of the $f(x)-\ell$ positions in $P$. Let $P_0$ be the set of positions of zero-bits in $B_1$. In our strategy $P_0$ will serve as a set of candidate positions, which we will shrink iteratively. We create a search point $x'$ by flipping the first half of the positions $P_0$ in $x$, and query $f(x')$. If $f(x') > f(x)$ then we (have to) accept the new search point, and we reset all of $B_2$ to zero. If $f(x')<f(x)$ then we know that at least one of the positions in $P$ lies in the first half of $P_0$, so we replace $P_0$ by its first half, and encode this result by a single bit in $B_2$. Finally, if $f(x') = f(x)$ then at least one (in fact, all) of the positions in $P$ lie in the second half of $P_0$, so we replace $P_0$ by its second half, and also encode this result by a single bit in $B_2$. Repeating this step, we iteratively decrease the size of $P_0$ by a factor of $2$, or find a better search point. We continue this procedure for at most $\lceil \log n \rceil$ rounds, after which we have either found a better search point, or reduced $P_0$ to a single position, which then must be a position in $P$. In the latter case, we flip the corresponding bit in $B_1$ to one, reset $B_2$ to zero, and continue.\\
In this way, in $\log n$ steps, we can increase either the fitness of $x$, or the number of ones in $B_1$. Since we never flip any bit in $B_1$ to zero, after $O(n \log n)$ steps, either $f(x) = n$ or $B_1$ is the all-one string. By the invariant the latter also implies $f(x)=n$, so the algorithm finds the optimum after $O(n \log n)$ steps.
\end{proof}

\section{Conclusions and Outlook}
We have shown that the (1+1) elitist black-box complexity of \lo is $\Theta(n^2)$. This is in contrast to the situation for the \onemax function, where elitist selection does not substantially harm the running time~\cite{DoerrL15OM}. Given the much smaller complexity of \lo in many other models, this sheds some light on the cost of elitism. In fact, our proof suggests that the reason for the large complexity is rather the memory restriction than the selection strategy. We thus conjecture that the lower bound in Theorem~\ref{thm:LO} holds already for (1+1) memory-restricted algorithms, but we do not see at the moment a feasible proof for this claim. In particular the generalization of Lemma~\ref{lem:coins}, i.e., the statement that it suffices to consider one-bit flips, seems tricky.

Our methods are adapted to the \lo problem, and it is probably non-trivial to transfer them to other problems. Nevertheless, we are optimistic that the similar approaches can work both for other black-box complexities and other function classes.

Finally, in light of~\cite{DoerrDE13} we are interested to use the insights from our investigations for the design of new search heuristics. 

\subsection*{Acknowledgments}
This research benefited from the support 
of the ``FMJH Program Gaspard Monge in optimization and operation research'', 
and from the support to this program from \'Electricit\'e de France.

}
\newcommand{\etalchar}[1]{$^{#1}$}

\end{document}